\documentclass{article}






\usepackage{amsfonts}
\usepackage{amssymb}
\usepackage{amsmath}
\usepackage{amsthm}
\usepackage{color}
\usepackage{graphicx}
\usepackage{varwidth}
\usepackage{caption}
\usepackage{array}
\usepackage{subfig}
\usepackage{algorithm,algpseudocode}
\usepackage{mathtools}
\usepackage[inline]{enumitem}
\usepackage[export]{adjustbox}
\usepackage{verbatim}

\usepackage{enumitem}
\newlist{Properties}{enumerate}{2}
\setlist[Properties]{label=Property \arabic* ,leftmargin=*}

\def\IR{\relax{\rm I\kern-.18em R}}
\def\p{\partial}
\def\<{\langle}
\def\>{\rangle}

\DeclarePairedDelimiter\norm{\lVert}{\rVert}%
\makeatletter
\let\oldnorm\norm
\def\norm{\@ifstar{\oldnorm}{\oldnorm*}}
\makeatother

\def\sgn{\textrm{sgn}}
\def\({\right(}
\def\){\right)}

\algnewcommand{\Initialize}[1]{%
	\State \textbf{Initialize:}
	\Statex \hspace*{\algorithmicindent}\parbox[t]{.8\linewidth}{\raggedright #1}
}



\newtheorem{theorem}{Theorem}
\newtheorem{proposition}[theorem]{Proposition}
\newtheorem{definition}[theorem]{Definition}

\providecommand{\keywords}[1]{\textbf{\textit{Index terms---}} #1}


\begin{document}

	\title{Flows Generating Nonlinear Eigenfunctions
	}
	
	\author{Raz Z. Nossek
		\and
		Guy Gilboa
	}
	
	\maketitle

	\begin{abstract}
		Nonlinear variational methods have become very powerful tools for many image processing tasks. Recently a new line of research has emerged, dealing with nonlinear eigenfunctions induced by convex functionals. This has provided new insights and better theoretical understanding of convex regularization and introduced new processing methods. However, the theory of nonlinear eigenvalue problems is still at its infancy. We present a new flow that can generate nonlinear eigenfunctions of the form $T(u)=\lambda u$, where $T(u)$ is a nonlinear operator and $\lambda \in \mathbb{R} $ is the eigenvalue. We develop the theory where $T(u)$ is a subgradient element of a regularizing one-homogeneous functional, such as total-variation (TV) or total-generalized-variation (TGV). We introduce two flows: a forward flow and an inverse flow; for which the steady state solution is a nonlinear eigenfunction. The forward flow monotonically smooths the solution (with respect to the regularizer) and simultaneously increases the $L^2$ norm. The inverse flow has the opposite characteristics. For both flows, the steady state depends on the initial condition, thus different initial conditions yield different eigenfunctions. This enables a deeper investigation into the space of nonlinear eigenfunctions, allowing to produce numerically diverse examples, which may be unknown yet. In addition we suggest an indicator to measure the affinity of a function to an eigenfunction and relate it to pseudo-eigenfunctions in the linear case.
		
	\end{abstract}
	
	
		\keywords{Nonlinear eigenfunctions, variational methods, nonlinear flows, total-variation, nonlinear spectral theory, one-homogeneous functionals.}

	\markboth{Nossek and Gilboa}{Flows Generating Nonlinear Eigenfunctions}

	\section{Introduction}
	\label{sec:intro}
	
	Nonlinear convex functionals have become very instrumental in recent years in formulating mathematical solutions for a variety of image processing
	and computer vision problems, such as denoising \cite{rudin1992nonlinear, knoll2011second, louchet2011total, yang2013non, kindermann2005deblurring,Elmoataz2008Nonlocal, Gilboa2009Nonlocal}, optical flow \cite{motion2001Weickert, werlberger2011optical, deriche1995optical}, inpainting \cite{chan2001inpainting, burger2009cahn,dong2012wavelet}, 3D processing \cite{Schmaltz2012}, segmentation \cite{chan2001active, jung2012nonlocal, RegBasedSeg2013, MultiLabel2009convex, pock2009convex}	and more.
		
	These functionals are often used to regularize an inverse problem and direct the solution to be more probable and physical. This is done in order to cope with noisy, low quality or missing data. A very effective class of  functionals used in these cases is the family of one-homogenous functionals, which includes all norms and semi-norms. Specifically, functionals based on the $L^1$ norm of derivatives of the signal promote sparsity of the gradients and yield edge preservation, which is an essential characteristic in natural and medical imagery, motion fields, depth maps and other signals.
	
	The simplest, most practical and parameter-free one-homogeneous functional of this class is the total variation (TV), which is essentially the $L^1$ norm of the gradient, or more formally,
	\begin{equation}\label{eq:TV_def_gen}
	TV(u) = \sup \left \{ \int_{\Omega}u(x)\textnormal{div} \phi(x) \, dx: \phi \in \mathbf{C}_c^1(\Omega,\mathbb{R}^n), \|\phi\|_{L^{\infty}} \le 1 \right \}
	\end{equation}	
	where $\mathbf{C}_c^1(\Omega,\mathbb{R}^n)$ is the set of continuously differentiable vector functions of compact support in $\Omega$.
	Introduced in image processing by Rudin et al. \cite{rudin1992nonlinear} (known as the ROF model) for image denoising and deconvolution, this functional and its different variations were extensively used in various applications. Mathematically, a large body of theoretical research was devoted to explore its properties. For recent monographs on the subject see \cite{burger2013guide, chambolle2010introduction}.
	
	A more general and highly useful regularizer, proposed in recent years by Bredies et al. \cite{ bredies2010total, knoll2011second}, is the \emph{total generalized variation} (TGV), which is based on higher order derivatives and is defined in the following way,
	\begin{multline}\label{eq:TGV_def_gen}
	TGV_\alpha^k(u) = \sup \left\{ \int_{\Omega}u(x)\textnormal{div}^k v \, dx: v \in \mathbf{C}_c^k(\Omega,\textnormal{Sym}^k(\mathbb{R}^n)), \right. \\
	\left. \|\textnormal{div}^l\|_{L^{\infty}} \le \alpha_l, \,\, l=0,...,k-1 \vphantom{\int_t} \right\},
	\end{multline}
	where $\textnormal{Sym}^k(\mathbb{R}^n)$ denotes the space of symmetric tensors of order $k$ with arguments in $\mathbb{R}^n$, and $\alpha_l$ are fixed positive parameters. In this class, the second order form called TGV$^2_\alpha$ is practical and is able to cope well with discontinuities as well as linear transitions (with no staircasing effects, as induced by the TV functional). Preliminary analysis for TGV was performed in \cite{benning2013higher, poschl2013exact,papafitsoros2013study}. Another active field of research is formulating nonlocal and graph-based functionals  \cite{kindermann2005deblurring,louchet2011total,yang2013non, Elmoataz2008Nonlocal, Gilboa2009Nonlocal} which allow data-driven regularization with complex nonlocal interactions.
	
	As regularizers grow more complex, their theoretical analysis becomes extremely involved. In those cases one may need to resort to numerical solutions.
	A very significant characteristic of regularizers in image processing is the type of shapes which the regularizer can preserve within a variational minimization or a gradient descent flow.
	Nonlinear eigenfunctions belong to this class and are therefore very significant in a thorough study of regularizers.
	
	\subsection{Nonlinear eigenfunctions}
	There are several ways to generalize the linear eigenvalue problem $Lu =\lambda u$, where $L$ is a linear operator, to the nonlinear case (for some alternative ways see \cite{appell2004nonlinear}). We use the following formulation,
	\begin{equation}
	\label{eq:gen_ef_prob}
	T(u)=\lambda u,
	\end{equation}
	where $T(u)$ is a bounded nonlinear operator defined on an appropriate Banach space $\mathcal{U}$, and $\lambda \in \mathbb{R}$ is the eigenvalue (we restrict ourselves to the real-valued setting). In this paper we focus on the case of nonlinear eigenfunctions induced by a convex functional $J(u)$, where the subgradient element $p(u) \in \partial J(u)$ acts as a (possibly) nonlinear operator, with $\partial J(u)$ being the subdifferential of $J(u)$.
	Thus we focus on the following eigenvalue problem,
	\begin{equation}
	\label{eq:eigenfunction}
	p(u)=\lambda u, \,\, p(u) \in \partial J(u),
	\end{equation}
	where $u$ admitting \eqref{eq:gen_ef_prob} is an \emph{eigenfunction} and $\lambda \in \mathbb{R}$ is the corresponding \emph{eigenvalue}.
	Note that in some cases one restricts $u$ to have $\|u\|_{L^2}=1$, however in this paper we keep the un-normalized setting.
	
	For a proper, convex, one-homogeneous functional, a gradient flow is defined by
	\begin{equation}
	\label{eq:grad_flow}
	u_t=-p \,\,\,\, u|_{t=0}=f, \,\,\, p \in \partial J(u),
	\end{equation}
	where $u_t$ is the first time derivative of $u(t;x)$.
	It was shown in \cite{burger2016spectral} that when the flow is initialized with an eigenfunction (that is, $\lambda f \in  \partial J(f)$) the following solution is obtained:
	\begin{equation}
	\label{eq:grad_flow_sol}
	u(t;x) = (1-\lambda t)^+ f(x),
	\end{equation}
	where $(q)^+=q$ for $q>0$ and 0 otherwise. This means that the shape $f(x)$ is spatially preserved and changes only by contrast reduction throughout time.
	To avoid the reduction in contrast, techniques like inverse scale space \cite{burger2006nonlinear}, spectral filtering \cite{Gilboa2014SpecTV,burger2016spectral} or recent debiasing techniques \cite{Deledalle2015,brinkmann2016bias} can be used.
	
	A similar behavior (see \cite{burger2016spectral}) can be shown for a minimization problem with the $L^2$ norm, defined as follows:
	\begin{equation}
	\label{eq:ef_min_for}
	\underset{u} {\min} \,\, J(u) + \frac{\alpha}{2}\|f-u\|^2_{L^2}. 
	\end{equation}	
	In this case, when $f$ is an eigenfunction and $\alpha \in \mathbb{R}^+$ ($\mathbb{R}^+ = \{x \in \mathbb{R} \,| \, x \ge 0\}$) is fixed, the problem has the following solution:
	\begin{equation}
	\label{eq:ef_min_for_sol}
	u(x) = \left(1-\frac{\lambda}{\alpha}\right)^+f(x).
	\end{equation}
	In this case also, $u(x)$ preserves the spatial shape of $f(x)$ (as long as $\alpha > \lambda$).
	This was already observed by Meyer in \cite{Meyer[1]} for the case of a disk with $J$ the TV functional.
	We note that this also holds for quadratic regularizers with linear induced operators.
	This motivates us to explore eigenfunctions of different regularizers.
		
	Earlier research on nonlinear eigenfunctions induced by TV has been referred as \emph{calibrable sets}. First aspects of this line of research can be found in the work of Bellettini et al. \cite{bellettini2002total}.
	They introduced a family of convex bounded sets $C$ with finite perimeter in $\mathbb{R}^2$ that preserve their boundary throughout the TV flow (gradient flow \eqref{eq:grad_flow} where $J$ is TV). It is shown that the characteristic function $\chi_C$ with perimeter $P(C)$ which admits:
	\begin{equation}
	\underset{p \in \partial C}{\text{ess sup }} \kappa(p) \le \frac{P(C)}{|C|}
	\end{equation}
	is an eigenfunction, in the sense of \eqref{eq:gen_ef_prob}, where $u=\lambda_C \chi_C$ and
	\begin{equation}
	\label{eq:lam_tv}
	\lambda_C = \frac{P(C)}{|C|}.
	\end{equation}

	As discussed above, having a better understanding of properties of the eigenfunctions can assist in the choice of a proper functional for a given image processing task.
	The behavior of eigenfunctions under some kind of processing is illustrated in a toy example in  figure \ref{fig:EF_spec}. To explain this we first need to outline the TV spectral representation of \cite{Gilboa2014SpecTV}.
	
	\begin{figure}[!hp]
		\captionsetup{justification=centering}
		\centering
		\subfloat[ Numerical TV eigenfunction $g$]{\label{fig:TV_ef} \includegraphics[width=0.27\textwidth]{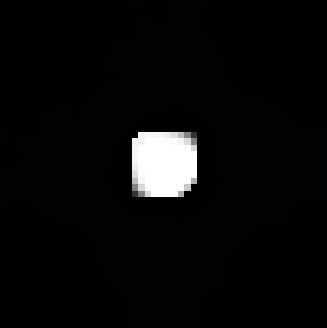}}
		\quad \qquad \qquad
		\subfloat[ Spectral response of $g$ ]{\label{fig:TV_ef_spec}
			\includegraphics[width=0.37\textwidth]{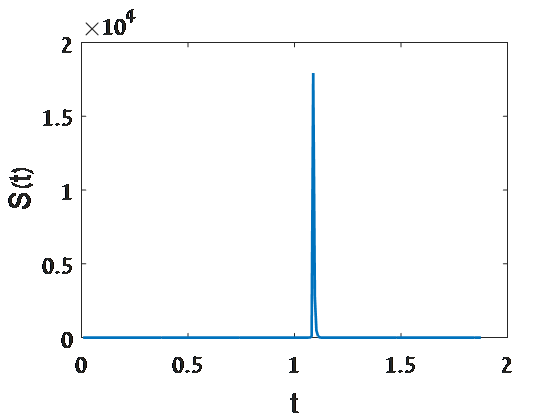}
		}
		\qquad \qquad \qquad
		\subfloat[ Eigenfunction with additive white Gaussian noise $n$ ($\sigma=0.3$), $f = g+n$. ]{\label{fig:TV_ef_noise}
			\includegraphics[width=0.27\textwidth]{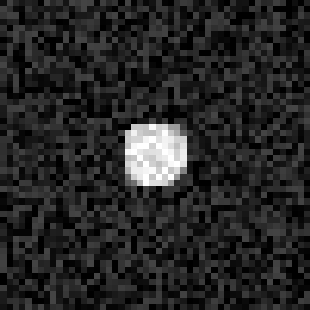}}
		\quad \qquad \qquad
		\subfloat[ Spectral response of $f$ ]{\label{fig:TV_ef_noise_spec}
			\includegraphics[width=0.37\textwidth]{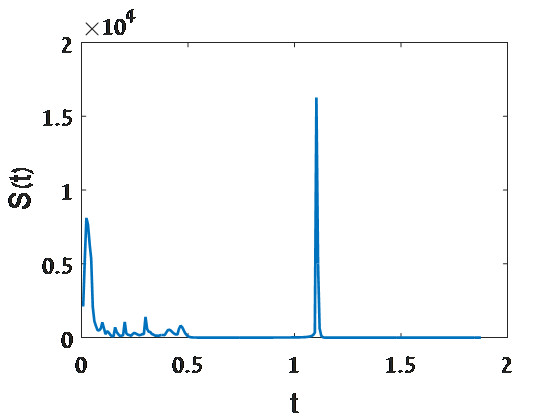}}
		\qquad \qquad \qquad
		\subfloat[Denoised $f$ using BM3D PSNR=24.66$_{dB}$]{\label{fig:TV_denoise_BM3D}
			\includegraphics[width=0.25\textwidth]{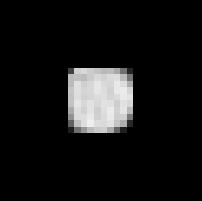}}
		\qquad
		\subfloat[Denoised $f$ using EPLL PSNR=24.62$_{dB}$]{\label{fig:TV_denoise_EPLL}
			\includegraphics[width=0.25\textwidth]{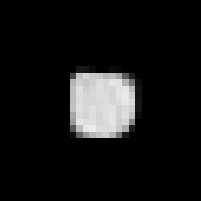}}
		\qquad
		\subfloat[Denoised $f$ using Spectral TV LPF PSNR=28.12$_{dB}$]{\label{fig:TV_denoise_spec}
			\includegraphics[width=0.25\textwidth]{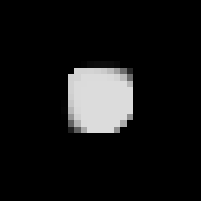}}
		\captionsetup{justification=justified}
		\caption{Example showing how a regularizer $J$ is very well suited to process an eigenfunction $g$ admitting $\lambda g \in \partial J(g)$.
				In this case $J$ is the (discrete) isotropic TV functional. From top left,
				(a) Eigenfunction $g$, (b) Its spectral response $S(t)$, (c) Eigenfunction with noise and its spectral response (d),
				performing denoising using: BM3D (e), EPLL (f) and TV-spectral filtering (g).}
		\label{fig:EF_spec}
	\end{figure}
	
	\subsection{Spectral TV}
	In \cite{Gilboa2014SpecTV} an alternative approach of spectral representation for TV was proposed. This was then generalized to one homogeneous functionals in \cite{burger2016spectral}. We briefly describe the basic TV setting. Let $f(x)\in BV$ be an input image with zero mean (for simplicity). The function $u(t;x)$ is the TV gradient descent solution, \eqref{eq:grad_flow}, with $J$ the TV functional. The TV transform is defined by
	\begin{equation}\label{eq:TV_trans}
	\phi(t;x)=u_{tt}(t;x)t,
	\end{equation}
	where $u_{tt}(t;x)$ is the second time derivative of $u(t;x)$.
		The function $\phi(t_0;x)$ is a spectral component of $f(x)$ at the scale $t_0$. It is shown in \cite{burger2016spectral}, that under a certain setting $\phi(t_0;x)$ is a difference of two eigenfunctions. Moreover, it admits an orthogonality property to all other $\phi$'s at different scales, $\langle \phi(t_0,x),\phi(t,x)\rangle=0$, $\forall t \ne t_0 $ .
	
	The reconstruction formula (inverse transform) is defined by,
	\begin{equation}
	\label{eq:inv_TV_trnas}
	f(x) = \int_{0}^{\infty}\phi(t;x) \, dt.
	\end{equation}
	Thus this representation can be interpreted as a nonlinear orthogonal decomposition of a signal into its multiscale components, based on a regularizing functional.
	Filtering in the spectral domain is performed by plugging a transfer function $H(t) \in \mathbb{R}$ (spectral filter) in the reconstruction formula,
	\begin{equation}
	\label{eq:TV_trans_filter}
	f_H(x) = \int_{0}^{\infty}\phi(t;x)H(t) \, dt.
	\end{equation}
	This procedure essentially attenuates, amplifies or preserves each spectral component.
	The spectrum $S(t)$ of the input signal $f(x)$ is defined in \cite{Gilboa2014SpecTV} by:
	\begin{equation}
	\label{eq:TV_trans_spec}
	S(t) =\|\phi(t;x)\|_{L^1}= \int_{\Omega}|\phi(t;x)| \, dx,
	\end{equation}
	with other variations suggested in \cite{burger2016spectral}.
	A significant property of the above representation is that when $f(x)$ is an eigenfunction with eigenvalue $\lambda$ (i.e admits \eqref{eq:gen_ef_prob}), the transform results in a single impulse at time $t=1/\lambda$ multiplied by $f(x)$, i.e.
	$$	\phi(t;x) = \delta (t-1/\lambda)f(x), $$
	where $\delta (\cdot)$ is the Dirac delta.
	
	In figure \ref{fig:EF_spec} an eigenfunction for the discrete TV functional is given as computed by the flow described later in section \ref{sec:flows} (note that contrary to the continuous case, the shape is not precisely convex and is not of constant value, as in the continuous case of \cite{bellettini2002total}). It can be seen in figure \ref{fig:TV_ef_spec}, that the spectral response $S(t)$ of the eigenfunction approaches a numerical delta. As this is based on a smoothing TV-flow, the noise response appears mostly in smaller scales and is well separated from the clean eigenfunction in the transform domain, figure \ref{fig:TV_ef_noise_spec}. Thus, in order to denoise one performs the nonlinear analog of an ideal low-pass-filter with $H(t)=1$ for $t \ge t_c$ and $0$ otherwise ($t_c$ is the cutoff scale, note here that high ``frequencies'' appear at low $t$). Denoising an eigenfunction is mostly suitable for such spectral filtering. As can be seen in figures \ref{fig:TV_denoise_BM3D}, \ref{fig:TV_denoise_EPLL}, and \ref{fig:TV_denoise_spec}, results compete well with state-of-the-art denoising algorithms such as BM3D \cite{dabov2007image} or EPLL \cite{zoran2011learning}.
	
	Therefore, by having a better understanding of the regularizer and its eigenfunctions, one can enhance the regularization quality by adapting the functionals to fit the class of signals to be processed.
	
	\subsection{Numerical Eigenvalue Algorithms}
	Linear eigenvalue problems arise in many fields of science and engineering: in civil-engineering they determine how resistant a bridge is to vibrations; in quantum mechanics they impose the modes of a quantum system and in fluid mechanics they induce the flow of liquids near obstacles.  Complex high dimensional eigenvalue problems arise today in disciplines such as machine learning, statistics, electrical networks and more. There is vast research and literature, accumulated throughout the years, on numerical solvers for linear eigenvalue problems \cite{wilkinson1965algebraic, trefethen1997numerical, Numerical2011Saad,borm2012numerical}.
	Given a matrix $A$, a common practice is to calculate an eigenvalue revealing factorization of $A$, where the eigenvalues appear as entries within the factors and the eigenvectors are columns in an orthogonal matrix used in the decomposition (e.g Schur factorization and unitary diagonalization). This is often performed by applying a series of transformations to $A$ in order to introduce zeros in certain matrix entries. This process is done iteratively until convergence. Notable algorithms applying such techniques are the \emph{QR algorithm} \cite{Francis1961QR} or the \emph{divide-and-conquer} algorithm \cite{Cuppen1980divide}. As a consequence, these methods are appropriate for linear operators on finite dimensional spaces (matrices), and unfortunately such techniques do not naturally extend to the nonlinear case. However, not all techniques perform a sequence of factorizations (or diagonalization).  One of such methods is the \emph{inverse power method} (IPM) and its extension, the \emph{Rayleigh quotient iteration} \cite{trefethen1997numerical}. Hein and B\"{u}hler \cite{Hein2010IPM} found a clever way to generalize the Rayleigh quotient iteration to the nonlinear eigenvalue problem case, with the same definition as in \eqref{eq:gen_ef_prob}.
	In section \ref{sec:prev_work} we describe this method in more details. In section \ref{sec:numerics} we compare our proposed flow to this state-of-the-art method.

	\subsection{Main Contributions}
	Our main contribution in this paper is presenting a new iterative flow-type method that can generate nonlinear eigenfunctions induced by convex one-homogeneous functionals.
	our contribution includes:
	\begin{enumerate}
		\item {Analyzing the properties of the flow, and showing it reaches a necessary condition for a steady-state if and only if $u(t)$ is an eigenfunction.}
		\item {Introducing a simple iterative scheme to advance the forward flow, which can use any modern convex solver that minimizes problems of the type $J(u)+\alpha\|f-u\|_{L^2}^2$.}
		\item {Performing several experiments for the cases of TV and TGV functionals and comparing the results to the state-of-the-art method of Hein and B\"{u}hler \cite{Hein2010IPM}. We show that our proposed method tends to find more complex eigenfunctions, with larger eigenvalues, and is less attracted to the simplest nontrivial eigenfunction (minimal positive eigenvalue) as often occurs in \cite{Hein2010IPM}. }
		\item {Presenting the possibility to use an inverse flow, especially directed for non-smooth high-eigenvalue cases and showing our method can be used in the linear case, under some assumptions on the linear operator $L$.}
		\item {Proposing a new measure of affinity for nonlinear eigenfunction, i.e. a measure which determines the proximity of a certain function to an eigenfunction of some nonlinear operator $T$. We also connect this to the notion of pseudo-eigenfunctions and pseudo-spectra in the linear case.}
	\end{enumerate}
	
	\section{Preliminaries}
	\label{sec:ef_prop}
	As this work aims at finding eigenfunctions numerically, it is more natural to be in a discrete setting.
	We assume a $d$ dimensional signal with $N$ pixels.
	We denote $\mathcal{X}$ as the Euclidean space $\mathbb{R}^N$ endowed with the $L^2$ inner product
	$\<u,v\> := \sum_{1\le i\le N}u_i v_i$ and the $L^2$ norm $\|u\| := \sqrt{\<u,u\>}$.
	
	\subsection{Properties of one-homogeneous functionals}
	
	Let $J(u)$ be a one homogeneous convex functional, that is
	\begin{equation}
	\label{eq:J1hom}
	J(\alpha u) = |\alpha|J(u),  \,\, \alpha \in \mathbb{R},
	\end{equation}
	and admits $J: \mathcal{X} \rightarrow \mathbb{R}$. Let $p$ belong to the subdifferential of $J(u)$:
	\begin{equation}
	\label{eq:subdif}
	\partial J(u) = \left\{p(u) \,|\, J(v)-J(u) \ge \< p(u),v-u \>, \forall v \in \mathcal{X}\right\}.
	\end{equation}
	We denote $p(u) \in \p J(u)$. $p$ also satisfies the relation induced by the Legender-Fenchel transform:
	\begin{equation}
	\label{eq:Legender}
	J^*(p):=\underset{u}{\sup}\left\{\<u,p\>-J(u)\right\}.
	\end{equation}
	And $J^*(p)$ is known as the dual functional (or convex conjugate \cite{ekeland1976convex}).
	
	For convex one homogeneous functionals it is well known \cite{ekeland1976convex} that:
	\begin{equation}
	\label{eq:up}
	J(u) =  \< u,p(u) \>, \forall p(u) \in \p J(u),
	\end{equation}
	and that
	\begin{equation}
	\label{eq:p_alpha}
	p(\alpha u) =  \sgn(\alpha) p(u),  \,\, \mathbb{R} \ni \alpha \ne 0.
	\end{equation}
	From \eqref{eq:subdif} and \eqref{eq:up} we have that a subdifferential of one-homogeneous functionals admits the following inequality:
	\begin{equation}
	\label{eq:subdif1hom}
	J(v) \ge \< p(u),v \>, \forall p(u) \in \p J(u), \,v \in \mathcal{X}.
	\end{equation}
	
	One-homogeneous functionals obey the triangle inequality:
	\begin{equation}
	\label{eq:triangle}
	J(u+v) \le J(u) + J(v).
	\end{equation}
	This can be shown by $J(u+v)=\<u+v,p(u+v)\>=\<u,p(u+v)\>+\<v,p(u+v)\>$
	and using \eqref{eq:subdif1hom} we have $J(u) \ge \<u,p(u+v)\>$ and $J(v) \ge \<v,p(u+v)\>$.
	
	By the Cauchy-Schwarz inequality Eq. \eqref{eq:up} also directly implies
	\begin{equation}
	\label{eq:J_CS}
	J(u) \le \|u\| \|p(u)\|, \forall p(u) \in \p J(u).
	\end{equation}
	
	The null space of a functional $J$ (which is a linear subspace, see e.g. \cite{benning2012ground}), is defined as
	\begin{equation}
	\mathcal{N}(J) = \{u \in \mathcal{X} \,\, | \,\, J(u)=0 \}.
	\end{equation}
	The orthogonal complement of the null space of $J$ (also a linear subspace) is
	\begin{equation}
	\label{eq:orth_null}
	\mathcal{N}(J)^{\perp} = \{v \in \mathcal{X} \,\, | \,\, \<v,u\>=0, \, \forall u \in \mathcal{N}(J) \}.
	\end{equation}

	We denote the projection operator onto $\mathcal{N}(J)$ by $P_0$ and the projection onto $\mathcal{N}(J)^{\perp}$ by $Q_0=I-P_0$.
	Note that for the TV case, projecting a function $f$ on $\mathcal{N}(J)^{\perp}$ can be done by enforcing $\langle f,1 \rangle=0$ or reducing the mean value of $f$.
	
	\subsubsection*{Basic properties of eigenvalues}
	
	One can generalize to the one-homogeneous case the relation of Eq. \eqref{eq:lam_tv} between $\lambda$ and the perimeter to area ratio which were given before in the specific case of a characteristic set, where $J$ is TV.
	For $J$ a one-homogeneous convex functional and $u$ an eigenfunction admitting
	\eqref{eq:eigenfunction} ($\|u\| > 0$) we have
	\begin{equation}
	\label{eq:lam_1hom}
	\lambda = \frac{J(u)}{\|u\|^2}.
	\end{equation}
	This can be easily shown by using \eqref{eq:up} and \eqref{eq:eigenfunction} having
	$$ J(u) = \<p(u),u \> = \<\lambda u, u \> = \lambda \| u \|^2.$$
		
	{\bf Condition for positive eigenvalues.}
	We now discuss briefly under what conditions eigenvalues are strictly positive $\lambda > 0$ for eigenfunctions of convex functionals.
	For the one homogeneous case this is a straightforward statement. Let $J$ be a convex positively one-homogeneous functional (therefore $J(u)\ge0$, $\forall u \in \mathcal{X}$).
	Then for any eigenfunction $u \notin \mathcal{N}(J)$,  that is $J(u)>0$, Eq.  \eqref{eq:lam_1hom} yields $\lambda>0$.
	We can have a broader statement in the case of general convex functionals:
	For $J$ a proper convex functional and $u$ an eigenfunction, if $J(u)>J(0)$ then $\lambda > 0$.
	This can be shown by using Eq. \eqref{eq:subdif} with $v=0$, yielding
	$$ J(0) - J(u) \ge \langle p(u),-u \rangle. $$
	For $p(u)=\lambda u$ we obtain $ J(u)-J(0) \le \lambda \| u\|^2$, thus
	$$ 	0 < \frac{J(u)-J(0)}{\|u\|^2} \le \lambda. $$
	
	\subsection{Previous work}
	\label{sec:prev_work}	
		
	We give here a brief overview of the method of Hein and B\"{u}hler \cite{Hein2010IPM}. The authors extend the inverse power method (for more information on the basic method see e.g \cite{Numerical2011Saad}) for finding eigenvalues and eigenvectors for matrices to the nonlinear case with one-homogeneous functionals. In order to understand the method in \cite{Hein2010IPM}, first let us consider the Rayleigh quotient that is defined as
	\begin{equation}
	\label{eq:Ray}
	F_{Rayleigh}(u) = \frac{\<u,Au\>}{\|u\|_2^2},
	\end{equation}
	where $A \in \mathbb{R}^{n \times n} $ is a real symmetric matrix and $u \in \mathbb{R}^n$.
	If $u$ is an eigenfunction of $A$ then $F_{Rayleigh}(u) = \lambda$ where $\lambda$ is the corresponding eigenvalue of $u$.
	In \cite{Hein2010IPM} the authors consider functionals $F$ of the form
	\begin{equation}
	\label{eq:IPM_func}
	F(u) = \frac{R(u)}{S(u)},
	\end{equation}
	where both $R$ and $S$ are convex and $R:\mathbb{R}^n \rightarrow \mathbb{R}^+, \,\,\, S:\mathbb{R}^n \rightarrow \mathbb{R}^+ $. One can observe that the functional in \eqref{eq:IPM_func} is a generalization of the functional in \eqref{eq:Ray}.
	A critical point $u^*$ of $F$ fulfills
	\begin{equation*}
	0 \in \partial R(u^*) - \lambda \partial S(u^*) ,
	\end{equation*}
	where $\partial R$, $\partial S$ are the subdifferentials of $R$ and $S$, respectively, and $\lambda = \frac{R(u^*)}{S(u^*)}$.
	We identify $R(u) = J(u)$ and $S(u) = \frac{1}{2}\|u\|^2_2$. Note that this equation now becomes the nonlinear eigenvalue problem \eqref{eq:eigenfunction}.
		
	The standard (linear) iterative IPM uses the scheme $Au^{k+1} = u^k$ in order to converge to the smallest eigenvector of $A$. This scheme can also be represented as an optimization problem:
	\begin{equation*}
	u^{k+1} = \underset{v}{\arg \min} \,\, \frac{1}{2} \<v,Av\> - \<v,u^k\>.
	\end{equation*}
	This can directly be generalized to the nonlinear case by
	\begin{equation}
	u^{k+1} = \underset{v}{\arg \min} \,\, J(v) - \<v,u^k\>.
	\end{equation}
	Specifically for one-homogeneous functionals a slight modification is required and the minimization problem is given by
	\begin{equation}
	u^{k+1} = \underset{\|v\| \le 1}{\arg \min} \,\, J(v) - \lambda^k\<v,u^k\>,
	\end{equation}
	i.e, adding the constraint that $\|v\| \le 1$ and the addition of $\lambda^k$, where $\lambda^k = \frac{J(u^k)}{\|u^k\|^2_2}$ to the minimization, in order to guarantee descent. 	
		
	\section{The Proposed Flows}
	\label{sec:flows}
	\subsection{Forward flow}
	\label{sec:for_flow}
	With sections \ref{sec:intro} and \ref{sec:ef_prop} outlining the background we can now introduce a method to obtain eigenfunctions. Let $J$ be a proper, convex, lower semi-continuous, one-homogeneous functional such that the gradient descent flow \eqref{eq:grad_flow} is well posed.
	We consider the following flow:
	\begin{equation}
	\label{eq:for_flow}
	u_t =  \frac{u}{\|u\|} - \frac{p}{\|p\|},\;\;\; p \in \p J(u),
	\end{equation}
	with $u|_{t=0}=f$, where $f$ admits $\|f\|\ne 0$,  $\langle f,1 \rangle=0$, $f \in \mathcal{N}(J)^{\perp}$. The later property can be achieved for any input $\tilde{f}$ by subtracting its projecting onto the null-space, $f = \tilde{f}-P_0 \tilde{f}$.
	Thus we have that $J(f) >0$. it can easily be shown that under these assumptions $\norm{u(t)} \ne 0$ and $\norm{p(t)} \ne 0$, $\forall t\ge0$, so the flow is well defined. We further assume that $J$ is a regularizing
	functional, invariant to a global constant change, such that
	$$ J(u)=J(u+c), \,\,\,\forall u \in \mathcal{X}, c \in \mathbb{R}. $$
	
	We will now show that this is a smoothing flow in term of the functional $J$ and an
	enhancing flow with respect the the $L^2$ norm, where a non-trivial steady state is reached
	for nonlinear eigenfunctions admitting Eq. \eqref{eq:eigenfunction} and only for them.
	
	\begin{theorem}
		\label{th:for_flow}
		The solution $u(t)$ of the flow of Eq. \eqref{eq:for_flow} has the following properties:
		
		\begin{Properties}
			\item The mean value of $u(t)$ is preserved throughout the flow: $$\< u(t),1 \>=0. $$
			\item \label{pro:J_des} $$\frac{d}{dt}J(u(t)) \le 0, $$ where equality is reached iff $u$ is an eigenfunction (admits \eqref{eq:eigenfunction}).
			\item \label{pro:u_asc} $$\frac{d}{dt}\|u(t)\|^2 \ge 0, $$ where equality is reached iff $u$ is an eigenfunction.
			\item A necessary condition for steady-state $u_t = 0$ holds iff $u$ is an eigenfunction.
		\end{Properties}
	\end{theorem}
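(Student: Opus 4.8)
The plan is to treat all four properties by differentiating the relevant scalar quantities along the flow \eqref{eq:for_flow} and reducing the resulting inequalities to the Cauchy--Schwarz bound \eqref{eq:J_CS}. I would use freely that $\|u(t)\|>0$ and $\|p(t)\|>0$ for all $t\ge 0$ (established just before the theorem), the identity $J(u)=\<u,p\>$ from \eqref{eq:up}, and the chain rule for convex functionals along an absolutely continuous curve: $t\mapsto J(u(t))$ is differentiable for a.e.\ $t$ with $\tfrac{d}{dt}J(u(t))=\<\xi,u_t\>$ for every $\xi\in\p J(u(t))$. I would either cite this from the gradient-flow literature or derive it on the spot from the two one-sided subgradient inequalities $J(u(t\pm h))-J(u(t))\ge \pm\<\xi,u(t\pm h)-u(t)\>$; in the computations below I take $\xi=p$, the subgradient appearing in \eqref{eq:for_flow}.

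\emph{Property 1.} First I would show that invariance of $J$ to a global constant forces $\<p,1\>=0$ for every $p\in\p J(u)$: plugging $v=u+c\cdot 1$ into \eqref{eq:subdif} gives $0=J(u+c)-J(u)\ge c\<p,1\>$ for every $c\in\IR$, hence $\<p,1\>=0$. Then $g(t):=\<u(t),1\>$ obeys $g'(t)=\<u_t,1\>=\tfrac{1}{\|u\|}\<u,1\>-\tfrac{1}{\|p\|}\<p,1\>=\tfrac{g(t)}{\|u(t)\|}$, a linear homogeneous ODE with $g(0)=\<f,1\>=0$, so $g\equiv 0$ by uniqueness (equivalently, $g(t)=g(0)\exp\!\big(\int_0^t ds/\|u(s)\|\big)=0$).

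\emph{Properties 2 and 3.} Pairing \eqref{eq:for_flow} with $p$ and with $u$ respectively, and using $\<u,p\>=J(u)$, I would obtain
\begin{gather*}
\tfrac{d}{dt}J(u(t))=\<p,u_t\>=\tfrac{J(u)}{\|u\|}-\|p\|,\\
\tfrac{d}{dt}\|u(t)\|^2=2\<u,u_t\>=2\|u\|-\tfrac{2J(u)}{\|p\|}.
\end{gather*}
Since \eqref{eq:J_CS} says $J(u)\le\|u\|\,\|p\|$, the first right-hand side is $\le 0$ and the second is $\ge 0$. In either case equality is exactly equality in Cauchy--Schwarz, i.e.\ $p=cu$ for a scalar $c$; then $J(u)=\<u,p\>=c\|u\|^2\ge 0$ together with $\|p\|>0$ forces $c>0$, so $p=cu$ is the eigenfunction relation \eqref{eq:eigenfunction} with $\lambda=c=\|p\|/\|u\|>0$ (matching \eqref{eq:lam_1hom}). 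Conversely, for an eigenfunction $p=\lambda u$ one checks directly that $\tfrac{J(u)}{\|u\|}=\lambda\|u\|=\|p\|$ and $\tfrac{J(u)}{\|p\|}=\|u\|$, so both derivatives vanish.

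\emph{Property 4.} Here $u_t=0$ means $\tfrac{u}{\|u\|}=\tfrac{p}{\|p\|}$, i.e.\ $p=\lambda u$ with $\lambda:=\|p\|/\|u\|>0$, which is \eqref{eq:eigenfunction}; conversely an eigenfunction $p=\lambda u$ with $\lambda>0$ gives $\tfrac{p}{\|p\|}=\tfrac{u}{\|u\|}$ and hence $u_t=0$. (It also drops out of Properties 2--3, since $u_t=0$ kills both derivatives.) The routine content is the three derivative computations; the steps needing care are the differentiability/chain-rule statement for $t\mapsto J(u(t))$, and the equality analysis — especially checking that the Cauchy--Schwarz proportionality constant is strictly positive, which is precisely where $J\ge 0$ and $\|p\|>0$ are used, so that one lands on a genuine positive-eigenvalue eigenfunction rather than a degenerate configuration. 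The global-constant invariance of $J$ is needed only for Property 1.
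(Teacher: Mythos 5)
Your proof is correct and follows essentially the same route as the paper: the ODE argument for $\<u(t),1\>$, the pairings of $u_t$ with $p$ and with $u$ reduced to Cauchy--Schwarz via $J(u)=\<u,p\>$, and the direct rearrangement for the steady-state condition. The only (harmless) variations are that you derive $\<p,1\>=0$ from the subgradient inequality with $v=u+c$ rather than via the conjugate functional as the paper does, and that you are slightly more careful than the paper in the equality analysis, checking that the Cauchy--Schwarz proportionality constant is positive and flagging the chain-rule/differentiability issue for $t\mapsto J(u(t))$, which the paper passes over silently.
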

	\vspace*{0.2 cm}
	\begin{proof}
		\begin{enumerate*}
		\item	From the invariance to constant change, $J(u)=J(u+c)$,
		\end{enumerate*}
		using \eqref{eq:Legender} it is easy to show that $$ J^*(p)=J^*(p) - \<c,p\>$$ yielding $c\<p,1 \>=0 $.		
		Let us define $Q(t) = \<u(t),1\>$. By using \eqref{eq:for_flow} and the above we obtain
		$$\frac{d}{dt}Q(t) = \< u_t(t),1\> = \< \frac{u}{\|u\|}-\frac{p}{\|p\|},1 \>  = \frac{1}{\|u\|}\<u,1\> = \frac{1}{\|u(t)\|}Q(t).$$
		The solution for this differential equation is given by $Q(t)=Be^{\int_0^t \frac{1}{\|u(\tau)\|} d\tau}$, where $B\in\mathbb{R}$ is some constant. Using the initial condition $u(t=0)=f$ and the fact that $\<f,1\> =0$ (hence $Q(t=0)=0$), yields $B=0$ resulting in $\<u(t),1\> = 0$, $\forall t \ge0$, i.e $u$ has mean zero and it is preserved throughout the flow.
		\begin{enumerate}
			\setcounter{enumi}{1}
			\item For the second claim we use \eqref{eq:eigenfunction} and \eqref{eq:up} obtaining
			$$ \frac{d}{dt}J(u(t)) = \langle p,u_t\rangle = \langle p, \frac{u}{\|u\|} - \frac{p}{\|p\|} \rangle = \frac{J(u)}{\|u\|}-\|p\| .$$
			Using \eqref{eq:J_CS} we conclude $\frac{J(u)}{\|u\|}-\|p\| \le 0 $ with equality if and only if $p$ is linearly dependent in $u$, hence an eigenfunction.\\
			\item The third claim can be verified in a similar manner by
			$$ \frac{d}{dt}\left(\frac{1}{2}\|u(t)\|^2\right) = \langle u,u_t\rangle = \langle u, \frac{u}{\|u\|} - \frac{p}{\|p\|} \rangle = \|u\| - \frac{J(u)}{\|p\|}.$$
			\item For the fourth claim, a necessary steady state condition is
			$$ u_t = \frac{u}{\|u\|} - \frac{p}{\|p\|} = 0.$$
			Therefore $p = \frac{\|p\|}{\|u\|}u$ and the eigenfunction equation \eqref{eq:eigenfunction} holds with $\lambda = \frac{\|p\|}{\|u\|}$.
			Naturally on the other direction, if \eqref{eq:eigenfunction} holds, $p = \lambda u$, we get $\frac{p}{\|p\|}=\frac{u}{\|u\|}$ and $u_t=0$.
		\end{enumerate}
	\end{proof}
	
	Notice that from \ref{pro:u_asc} of theorem \ref{th:for_flow} it might seem that $\|u\|^2_{L^2}$ can diverge. We show below that as long as the minimal nontrivial eigenvalue (with respect to the regularizer $J$ and the domain) is bounded from below by a positive constant, this does not happen.
	\begin{theorem}
		\label{th:L2_u_bounded}
		Let  $u(t)$ be the solution of the flow of Eq. \eqref{eq:for_flow}, then its $L^2$ norm is bounded from above.
	\end{theorem}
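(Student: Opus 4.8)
The plan is to use the minimal nontrivial eigenvalue to control the derivative of the $L^2$ norm, turning Property \ref{pro:u_asc} into a bound rather than a mere inequality. Define the Rayleigh-type quantity $\lambda(t) := J(u(t))/\|u(t)\|^2$, and let $\lambda_{\min} > 0$ denote the infimum of $J(v)/\|v\|^2$ over $v \in \mathcal{N}(J)^\perp$ with $\|v\| \ne 0$; since $u(t)$ stays in $\mathcal{N}(J)^\perp$ (the mean-zero subspace, by Property 1 of Theorem \ref{th:for_flow}), we have $J(u(t)) \ge \lambda_{\min}\|u(t)\|^2$ for all $t$. From the computation already carried out in the proof of Theorem \ref{th:for_flow}, Property \ref{pro:u_asc},
\begin{equation*}
\frac{d}{dt}\left(\frac{1}{2}\|u(t)\|^2\right) = \|u\| - \frac{J(u)}{\|p\|}.
\end{equation*}
So the first key step is to bound $\|p\|$ from below in terms of $\|u\|$ and $J(u)$, and then feed in $\lambda_{\min}$.

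The natural lower bound for $\|p\|$ comes from the reverse direction of Cauchy--Schwarz together with \eqref{eq:up}: since $J(u) = \langle u, p \rangle \le \|u\|\,\|p\|$ we only get $\|p\| \ge J(u)/\|u\|$, which substituted above yields $\frac{d}{dt}(\frac12\|u\|^2) \le \|u\| - \|u\| = 0$ — too weak, it only recovers monotonicity. So I would instead look for a lower bound of the form $\|p(u)\| \ge \sqrt{\lambda_{\min}}\,\|p(u)\|^{1/2}\cdots$; more precisely, recall that $p(u)$ is scale-invariant up to sign (Eq. \eqref{eq:p_alpha}), so $\|p(u)\|$ depends only on the ``shape'' $u/\|u\|$ of $u$. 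Let $\mu := \inf\{\|p(v)\| : v \in \mathcal{N}(J)^\perp,\ \|v\| = 1,\ p(v) \in \partial J(v)\}$. If $\mu > 0$, then $\|p(u(t))\| \ge \mu$ for all $t$, and combining with $J(u) \le \|u\|\,\|p(u)\|$ is still not enough; I would rather combine $\frac{d}{dt}(\frac12\|u\|^2) = \|u\| - J(u)/\|p\|$ with $J(u) \ge \lambda_{\min}\|u\|^2$ and $\|p\| \le \|p\|$... the honest route is: use $J(u)/\|p\| \ge J(u)/(\text{upper bound on }\|p\|)$. So the real content is an \emph{upper} bound on $\|p(u)\|$. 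By \eqref{eq:J_CS} applied in reverse we don't get one directly, but one-homogeneity gives $\|p(u)\| = \|p(u/\|u\|)\|$, and on the unit sphere intersected with $\mathcal{N}(J)^\perp$ the functional $J$ is bounded (finite-dimensional, $J$ continuous), hence $\|p(v)\|$ is bounded by some $M$; also $J(u) = \langle u,p\rangle$ with $\|p\| \le M$ gives $J(u) \le M\|u\|$. Then
\begin{equation*}
\frac{d}{dt}\left(\tfrac12\|u\|^2\right) = \|u\| - \frac{J(u)}{\|p\|} \le \|u\| - \frac{\lambda_{\min}\|u\|^2}{M},
\end{equation*}
which is strictly negative once $\|u\| > M/\lambda_{\min}$. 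This is the crux: the right-hand side is a quadratic in $\|u\|$ opening downward, so $\|u(t)\|$ cannot grow past $\max\{\|f\|,\ M/\lambda_{\min}\}$; I would make this rigorous by a standard comparison/Gronwall argument on $y(t) = \|u(t)\|^2$, noting $\dot y \le 2\sqrt{y} - \frac{2\lambda_{\min}}{M} y$, and concluding $\limsup_t y(t) \le (M/\lambda_{\min})^2$ with $y$ never exceeding $\max\{y(0), (M/\lambda_{\min})^2\}$.

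The main obstacle I anticipate is establishing that $\mu$ (or equivalently the uniform upper bound $M$ on $\|p(u)\|$ for unit-norm $u$) is finite and positive, i.e. that the subgradient does not degenerate in norm as the shape of $u$ varies. In the finite-dimensional discrete setting assumed in Section \ref{sec:ef_prop}, $J$ is a finite convex one-homogeneous functional, hence Lipschitz, so $\partial J$ is uniformly bounded — this gives $M < \infty$ cleanly, and that is all the argument above actually needs ($\lambda_{\min}>0$ is the hypothesis). The subtlety is only that $\|p(u(t))\|$ might a priori be any element of $\partial J(u(t))$ along the flow, so I should state the bound as $\|p\| \le M$ for \emph{every} subgradient, which again follows from Lipschitz continuity of $J$. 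I would also double-check that $\lambda_{\min} > 0$ is legitimately available: for $J = TV$ on the mean-zero subspace of a bounded discrete domain this is the discrete Poincaré inequality, and the theorem statement explicitly conditions on this being bounded below by a positive constant, so I would simply invoke it.
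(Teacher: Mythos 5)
There is a genuine gap, and it is at the heart of your argument. You define $\lambda_{\min}$ as the infimum of $J(v)/\|v\|^2$ over $v\in\mathcal{N}(J)^{\perp}$, $\|v\|\neq 0$, and then use $J(u)\ge\lambda_{\min}\|u\|^2$ with $\lambda_{\min}>0$. For a one-homogeneous functional this infimum is zero: by \eqref{eq:J1hom}, $J(\alpha v)/\|\alpha v\|^2 = J(v)/(|\alpha|\,\|v\|^2)\to 0$ as $|\alpha|\to\infty$. A quadratic lower bound on $J$ cannot hold uniformly; the only Poincar\'e-type inequality available on $\mathcal{N}(J)^{\perp}$ is the \emph{linear} one, $J(v)\ge C\|v\|$, and feeding that into your differential inequality gives $\frac{d}{dt}(\frac12\|u\|^2)\le \|u\|(1-C/\|p\|)$ with $C\le\|p\|$ (since $J(u)=\langle u,p\rangle\le\|u\|\|p\|$), i.e.\ nothing better than the monotonicity you already dismissed as "too weak." More tellingly, your intended conclusion --- that $\frac{d}{dt}\|u\|^2$ becomes strictly negative once $\|u\|$ is large --- directly contradicts Property \ref{pro:u_asc} of Theorem \ref{th:for_flow}, which states $\frac{d}{dt}\|u(t)\|^2\ge 0$ for all $t$. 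The norm is nondecreasing along the forward flow, so no comparison argument driving it back down can succeed; boundedness has to come from somewhere else. (Your subsidiary claims --- $\|p\|\le M$ for every subgradient by Lipschitz continuity of a finite convex one-homogeneous $J$ in finite dimensions --- are fine, but they do not rescue the scheme. Also, mean-zero is not in general the same as $\mathcal{N}(J)^{\perp}$; the paper keeps $u(t)$ in $\mathcal{N}(J)^{\perp}$ by noting that $p\in\mathcal{N}(J)^{\perp}$ and the flow is a linear combination of $u$ and $p$.)

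The paper's argument is static rather than dynamic, and this is the idea you are missing: since $J(u(t))$ is nonincreasing (Property \ref{pro:J_des}), at every time $t$ the state lies in the sublevel set $\{J\le J(f)\}\cap\mathcal{N}(J)^{\perp}$, so it suffices to bound $\max\{\|v\|^2 : J(v)=c\}$ for $c\le J(f)$. A Lagrange-multiplier computation shows the maximizer is itself an eigenfunction, and the maximum equals $c/\lambda_{\min,c}$, where $\lambda_{\min,c}$ is the minimal nontrivial eigenvalue among eigenfunctions normalized by $J(v)=c$ (this level-dependent normalization is exactly what your scale-free $\lambda_{\min}$ fails to capture). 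The scaling relation $\lambda_{\min,c_2}=(c_1/c_2)\lambda_{\min,c_1}$ then yields the uniform bound $\|u(t)\|^2\le J(f)/\lambda_{\min,J(f)}$, with no ODE comparison needed. If you want to keep a "Rayleigh quotient" flavor, the correct normalization is $J(v)/\|v\|$ (degree zero under scaling), which gives $\|u(t)\|\le J(u(t))/C\le J(f)/C$ directly once $J\ge C\|\cdot\|$ on $\mathcal{N}(J)^{\perp}$ with $C>0$ --- equivalent to the paper's hypothesis that the minimal nontrivial eigenvalue is bounded away from zero.
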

	\begin{proof}
		Let us define the minimal nontrivial eigenvalue for a specific value of the regularizer $J(u)=c>0$,  as
		$$ \lambda_{\min,c}: = \min_{u,\, \lambda u \in \partial J(u),\, J(u)=c} \lambda. $$
		Then when $\lambda_{\min,c}  > 0$ a bound on $\|u\|^2$ can be established.
		We examine the following optimization problem:
		\begin{equation*}
		\max \|u\|^2 \,\,\, \text{  s.t.   }  J(u)=c.
		\end{equation*}
		To solve this using Lagrange multipliers we define
		$$ \mathcal{L}(u,\alpha) = \|u\|^2 + \alpha (J(u)-c), $$
		yielding the necessary optimality condition,
		\begin{align*}
		\frac{\partial\mathcal{L}}{\partial u} &= 2u + \alpha p = 0, \\
		\frac{\partial\mathcal{L}}{\partial \alpha} &= J(u) - c = 0.
		\end{align*}
		Multiplying the first equation by $u$, summing and using $ J(u) = \<u,p\> $, we get $\alpha = -\frac{2\|u\|^2}{c} $ where  $p = -\frac{2}{\alpha} u $. Thus, the optimal $u$ is an eigenfunction with
		$\lambda =  -\frac{2}{\alpha} = \frac{c}{\|u\|^2} = \lambda_{\min,c}.$
		Moreover, for $c_2 > c_1$ we get $\lambda_{\min,c_2} < \lambda_{\min,c_1}$. This can be shown by choosing the minimal eigenfunction $u_{\min,c_1}$ corresponding to $\lambda_{\min,c_1}$ and multiplying it by $c_2/c_1$. Then this is clearly an eigenfunction restricted by $J(u)=c_2$ with a corresponding eigenvalue
		$$\lambda = \frac{J(u)}{\|u\|^2}=\frac{c_2}{\|u_{\min,c_1}c_2/c_1\|^2}= \frac{c_1}{c_2}\lambda_{\min,c_1} < \lambda_{\min,c_1}.$$
		
		Using the fact that $J(u(t))$ of the flow is decreasing with time we have $c \le J(f)$
		which yields the bound
		$$  \|u(t)\|^2|_{J(u(t))=c} \le \max_{J(u)=c} \|u\|^2 = \frac{c}{\lambda_{\min,c}} \le \frac{J(f)}{\lambda_{\min,J(f)}}, \;\; \forall t\ge 0. $$

		We remind that $f \in \mathcal{N}(J)^{\perp}$. 
		It is shown in \cite{burger2016spectral} Lemma 4 that if $p \in \partial J(u)$ then $p \in \mathcal{N}(J)^{\perp}$. Therefore, since our flow is a linear combination of $u$ and $p$ we are kept in the subspace $\mathcal{N}(J)^{\perp}$ and $J(u(t)) > 0$, $\forall t\ge0$.
	\end{proof}
	
	Another remark is that this process often does not converge to the eigenfunction with the smallest eigenvalue, and depends on the initialization of $f$.
	Note that from the above we can observe another interesting property of $\lambda$. As $\norm{(u(t))}$ is increasing with time and $J(u(t))$ is decreasing, then when an eigenfunction is reached, its eigenvalue $\lambda$ is bounded by
	\begin{equation}
	\label{eq:lam_dom}
		0 < \lambda \le \frac{J(f)}{\norm{f}^2}.
	\end{equation}
	
	\begin{figure}[!ht]
		\centering
		\subfloat[]{\label{fig:vec_unit_circ}
			\includegraphics[width=0.4\textwidth]{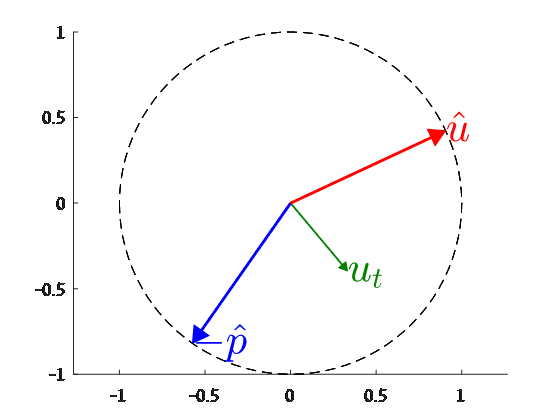}
		}
		\subfloat[]{\label{fig:vec_unit_circ_ef}
			\includegraphics[width=0.4\textwidth]{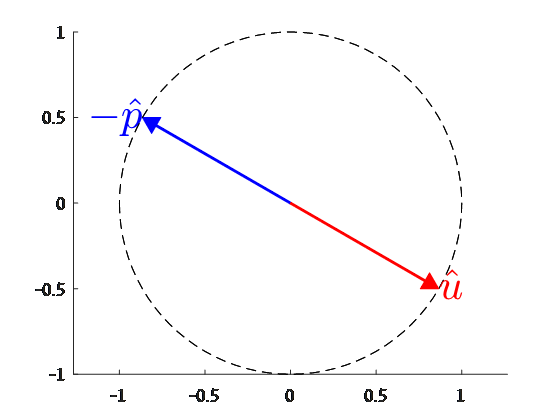}
		}
		\caption{An illustration of the geometric interpretation of the flow and the relation between $\hat{u}$ and $\hat{p}$. Figure (a) illustrates the general case where $u$ is not an eigenfunction induced by $J(u)$, while figure (b) illustrates the case where $u$ is an eigenfunction. Note that for this case $\hat{u}$ and $\hat{p}$ are exactly opposite one to another, yielding thus $u_t=0$.}
		\label{fig:unit_circ_interp}
	\end{figure}
	
	\subsubsection{Interpretation and regularity}
	One can define the $L^2$ unit vectors in the directions $u$ and $p$, respectively, as
	$$ \hat{u}=\frac{u}{\|u\|},\;\; \hat{p}=\frac{p}{\|p\|}, $$
	with $p \in \p J(u)$.
	The flow \eqref{eq:for_flow} can be rewritten as
	$$ 	u_t =  \hat{u} - \hat{p}. $$
	Thus there are two competing unit vectors. Notice that for one-homogeneous functionals $\<u,p\> = J(u) > 0$, and therefore the angle between $u$ and $p$ is in the range $(-\frac{1}{2}\pi,\frac{1}{2}\pi)$. Using this observation we later define an indicator which measures how close a function is to be an eigenfunction, see section \ref{sec:aff_meas}. The absolute angle between $\hat{u}$ and $-\hat{p}$ is larger than $\frac{\pi}{2}$, see figure \ref{fig:vec_unit_circ}, where for an eigenfunction $\hat{u}$ and $-\hat{p}$ are exactly at opposite directions (angle $\pi$) canceling each others contribution to the flow, enabling a steady-state solution (figure \ref{fig:vec_unit_circ_ef}).
	
	Regarding regularity, the flow \eqref{eq:for_flow} is essentially a time rescale of the gradient flow \eqref{eq:grad_flow} with amplification of $u$, so as long as there is no blow-up in $u$, the signal
	becomes smoother in terms of $J$ and regularity is maintained.
	
	\subsection{Inverse flow}
	An alternative flow which works in the inverse direction of \eqref{eq:for_flow} can also be defined:
	\begin{equation}
	\label{eq:inv_flow}
	u_t =  -\frac{u}{\|u\|} + \frac{p}{\|p\|},\,\,\, p \in \p_u J(u),
	\end{equation}
	with $u|_{t=0}=f$.
	
	This is an anti-smoothing flow in term of the functional $J$ and a reducing
	flow with respect the the $L^2$ norm, where also here a necessary steady state condition is reached
	for nonlinear eigenfunctions admitting Eq. \eqref{eq:eigenfunction} and only for them.
	
	\begin{theorem}
		\label{th:inv_flow}
		The solution $u(t)$ of the flow of Eq. \eqref{eq:inv_flow} has the following properties:
		\begin{enumerate}
			\item $$\frac{d}{dt}J(u(t)) \ge 0 $$ where equality is reached iff $u$ is an eigenfunction.
			\item $$\frac{d}{dt}\|u(t)\|^2 \le 0 $$ where equality is reached iff $u$ is an eigenfunction.
			\item A necessary condition for steady-state $u_t = 0$ holds iff $u$ is an eigenfunction.
		\end{enumerate}
	\end{theorem}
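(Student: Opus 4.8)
The plan is to follow the proof of Theorem~\ref{th:for_flow} almost verbatim, observing that the inverse flow \eqref{eq:inv_flow} is exactly the forward flow \eqref{eq:for_flow} with $u_t$ negated; hence each monotonicity inequality reverses direction, while the algebraic identities used (in particular \eqref{eq:up} and the Cauchy--Schwarz bound \eqref{eq:J_CS}) enter in precisely the same way.

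For the first claim I would differentiate $J$ along the trajectory, $\frac{d}{dt}J(u(t)) = \langle p, u_t\rangle$ with $p\in\partial J(u)$, substitute \eqref{eq:inv_flow}, and use \eqref{eq:up} to obtain
\[
\frac{d}{dt}J(u(t)) = \Big\langle p,\, -\frac{u}{\|u\|} + \frac{p}{\|p\|} \Big\rangle = -\frac{J(u)}{\|u\|} + \|p\|.
\]
By \eqref{eq:J_CS} this quantity is nonnegative, and it vanishes exactly when the Cauchy--Schwarz inequality is tight, i.e.\ when $p$ is parallel to $u$, which is the eigenfunction condition \eqref{eq:eigenfunction}.

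For the second claim I would compute in the same manner
\[
\frac{d}{dt}\Big(\tfrac12\|u(t)\|^2\Big) = \langle u, u_t\rangle = -\|u\| + \frac{J(u)}{\|p\|},
\]
which by \eqref{eq:J_CS} is nonpositive, with equality iff $u$ is an eigenfunction. For the third claim, $u_t=0$ forces $\frac{p}{\|p\|} = \frac{u}{\|u\|}$, hence $p=\lambda u$ with $\lambda = \|p\|/\|u\| > 0$, i.e.\ \eqref{eq:eigenfunction} holds; conversely, if $p=\lambda u$ with $\lambda>0$ then $p/\|p\| = u/\|u\|$ and $u_t=0$.

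The one genuinely new issue, which I would flag in a remark rather than inside the proof, is well-posedness of \eqref{eq:inv_flow}: in contrast to the forward flow, here $\|u(t)\|$ decreases and $J(u(t))$ increases, so the right-hand side of \eqref{eq:inv_flow} becomes singular if $\|u(t)\|$ or $\|p(t)\|$ reaches $0$. Accordingly the three properties are to be read on the maximal interval of existence, and one should note (as in the discussion after Theorem~\ref{th:L2_u_bounded}, via \cite{burger2016spectral} Lemma~4) that the flow remains in $\mathcal{N}(J)^\perp$, so that $J(u(t))>0$ --- and hence $\|u(t)\|>0$ --- persists as long as the solution does; ruling out a blow-down of $\|u\|$ to zero in finite time is the part that requires a little care.
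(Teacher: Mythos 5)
Your proposal is correct and matches the paper's approach exactly: the paper proves Theorem~\ref{th:inv_flow} by noting it "follows the same lines" as Theorem~\ref{th:for_flow}, which is precisely the sign-reversed computation you carry out using \eqref{eq:up} and \eqref{eq:J_CS}. Your extra remark on well-posedness (the possible decay of $\|u\|$ and the flow staying in $\mathcal{N}(J)^{\perp}$) is a reasonable caveat the paper does not address, but it does not change the argument.
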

	\begin{proof}
		The proof follows the same lines as the one of theorem \ref{th:for_flow}.
	\end{proof}
	
	From preliminary experiments, this flow tends to produce non-smooth eigenfunctions with large eigenvalues, as can be expected.
	We point out this formulation, however in this paper this direction is not further developed.

	\section{Extension to the linear case}
	Although the flow was developed for nonlinear convex functionals, under some constraints the method works for linear operators as well. First, let us rewrite the forward flow \eqref{eq:for_flow} for some linear operator $L: \mathcal{V} \rightarrow \mathcal{V}$ over $\mathbb{R}$,
	\begin{equation}\label{eq:liner_for_flow}
	u_t = \frac{u}{\|u\|} - \frac{Lu}{\|Lu\|},
	\end{equation}
$u|_{t=0}=f$, $\<f,1\>=0$, and $f$ not an element in the null space of $L$. Here we seek to find a function $u$ which is a linear eigenfunction, $Lu=\lambda u$. We would like to keep a similar framework as in the nonlinear case, and therefore assume that $L$ is a positive-semidefinite operator, i.e. $\forall u\in\mathcal{V} , \,\, \<Lu,u\> \ge 0$ (as a consequence $L$ is a self-adjoint operator). Another assumption is that for a constant $c \in \mathbb{R}, \,\, L(cI) = 0$, where $I$ is the identity, or $L(u+cI) = Lu$.	
Within the above setting, one obtains a flow with similar properties as in theorem \ref{th:for_flow}.
	\begin{proposition}
		\label{th:linear_flow}
		The solution $u(t)$ of the flow of Eq. \eqref{eq:liner_for_flow} has the following properties:
		\begin{Properties}
			\item The mean value of $u(t)$ is preserved throughout the flow: $$\< u(t),1\> = 0, $$
			\item $$\frac{d}{dt}\<Lu,u\> \le 0, $$ where equality is reached iff $u$ is an eigenfunction. 
			\item $$\frac{d}{dt}\|u(t)\|^2 \ge 0, $$ where equality is reached iff $u$ is an eigenfunction.
			\item A necessary condition for steady-state $u_t = 0$ holds iff $u$ is an eigenfunction.
		\end{Properties}
	\end{proposition}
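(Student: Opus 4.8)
The plan is to rerun the proof of Theorem~\ref{th:for_flow} essentially verbatim, with $Lu$ playing the role of the subgradient $p$, and with the one-homogeneity facts used there replaced by their linear surrogates: self-adjointness of $L$ (which follows from positive-semidefiniteness) takes the place of the identities $J(u)=\langle u,p\rangle$ and $\tfrac{d}{dt}J(u)=\langle p,u_t\rangle$, the hypothesis $L(u+cI)=Lu$ takes the place of $J(u)=J(u+c)$, and positive-semidefiniteness is what guarantees $\langle Lu,u\rangle\ge 0$. Before the four claims I would check that \eqref{eq:liner_for_flow} is well posed, i.e. $\|u(t)\|\neq 0$ and $\|Lu(t)\|\neq 0$ for all $t\ge 0$: the first is immediate once Property~3 is established (the $L^2$ norm is nondecreasing and $\|f\|\neq 0$); the second follows because, for self-adjoint $L$, $\mathrm{range}(L)=\mathcal{N}(L)^{\perp}$, so $Lu\in\mathcal{N}(L)^{\perp}$, and since $u_t$ is a linear combination of $u$ and $Lu$ the solution stays in $\mathcal{N}(L)^{\perp}$ whenever $f$ does (exactly as in the remark closing the proof of Theorem~\ref{th:L2_u_bounded}), where $\langle Lu,u\rangle>0$ and hence $Lu\neq 0$.

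For Property~1, taking $u=0$ in $L(u+cI)=Lu$ gives $L(cI)=0$, hence $L1=0$, and self-adjointness yields $\langle Lu,1\rangle=\langle u,L1\rangle=0$. Setting $Q(t)=\langle u(t),1\rangle$ and differentiating along \eqref{eq:liner_for_flow},
\[
Q'(t)=\Big\langle \frac{u}{\|u\|}-\frac{Lu}{\|Lu\|},\,1\Big\rangle=\frac{1}{\|u\|}\langle u,1\rangle-\frac{1}{\|Lu\|}\langle Lu,1\rangle=\frac{1}{\|u(t)\|}Q(t),
\]
which is the same ODE as in Theorem~\ref{th:for_flow}; with $Q(0)=\langle f,1\rangle=0$ this forces $Q\equiv 0$.

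For Properties~2 and 3, using $u_t=\frac{u}{\|u\|}-\frac{Lu}{\|Lu\|}$ together with $\tfrac{d}{dt}(Lu)=Lu_t$ and self-adjointness,
\[
\tfrac{d}{dt}\langle Lu,u\rangle=2\langle Lu,u_t\rangle=2\Big(\tfrac{\langle Lu,u\rangle}{\|u\|}-\|Lu\|\Big),\qquad
\tfrac{d}{dt}\big(\tfrac12\|u\|^2\big)=\langle u,u_t\rangle=\|u\|-\tfrac{\langle Lu,u\rangle}{\|Lu\|}.
\]
The Cauchy--Schwarz inequality $\langle Lu,u\rangle\le \|Lu\|\,\|u\|$ makes the first expression $\le 0$ and the second $\ge 0$, with equality in either case iff $Lu$ and $u$ are linearly dependent, i.e. $u$ is an eigenfunction. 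This is the step where self-adjointness is genuinely needed: without it $\tfrac{d}{dt}\langle Lu,u\rangle$ would carry the spurious term $\langle (L-L^{*})u,u_t\rangle$ and the sign could not be controlled.

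For Property~4, $u_t=0$ forces $\frac{u}{\|u\|}=\frac{Lu}{\|Lu\|}$, hence $Lu=\frac{\|Lu\|}{\|u\|}u$, an eigenfunction with $\lambda=\|Lu\|/\|u\|>0$. Conversely, if $Lu=\lambda u$ with $\|u\|>0$, then positive-semidefiniteness gives $\lambda=\langle Lu,u\rangle/\|u\|^2\ge 0$, and $\lambda\neq 0$ since $u\notin\mathcal{N}(L)$, so $\lambda>0$, $\frac{Lu}{\|Lu\|}=\frac{u}{\|u\|}$, and $u_t=0$. The point requiring care here — and the reason the positive-semidefinite hypothesis cannot be dropped — is precisely this converse: a negative eigenvalue would give $\frac{Lu}{\|Lu\|}=-\frac{u}{\|u\|}$ and $u_t=\frac{2u}{\|u\|}\neq 0$, so the equivalence would break. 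Everything else is parallel to the nonlinear case and presents no new obstacle.
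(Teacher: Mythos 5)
Your proof is correct and follows essentially the same route as the paper's (sketched) argument: the same ODE for $\langle u,1\rangle$, time-differentiation of $\langle Lu,u\rangle$ and $\tfrac12\|u\|^2$ with Cauchy--Schwarz, and the direct steady-state equivalence. You in fact supply more detail than the paper, correctly identifying where self-adjointness (in $\tfrac{d}{dt}\langle Lu,u\rangle=2\langle Lu,u_t\rangle$) and positive-semidefiniteness (to rule out negative eigenvalues in the converse of Property~4) are genuinely used.
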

	\vspace*{0.2 cm}
	\begin{proof}
The proof follows similar arguments as for the one-homogeneous case. For the first property we use the fact that $ L(cI) = 0$, thus $ \<cI,Lu\> = 0 $ and one can show
the zero mean is preserved throughout the flow. The second property is shown by deriving the expression $\<Lu,u\>$ in time and plugging for $u_t$ the identity of \eqref{eq:liner_for_flow}. The third property uses Cauchy-Schwarz by
$$\frac{d}{dt}\left(\frac{1}{2}\|u\|^2\right) = \<u,u_t\> = \<u,\frac{u}{\|u\|} - \frac{Lu}{\|Lu\|}\> = \|u\| - \frac{\<Lu,u\>}{\|Lu\|} \ge 0.$$
The fourth property is straightforward for linear eigenfunctions.
\end{proof}
	
	\section{Pseudo-Eigenfunctions}
	The first introduction to the idea of \emph{pseudospectra} was given by Landau \cite{Landau1975}, who used the term $\varepsilon$-spectrum. Further extension of the topic was given in \cite{Varah1979Separation,chatelin1981spectral}, generalizing the theory for matrices and linear operators. Trefethen coined the term \emph{pseudospectra} \cite{trefethen1990approximation, trefethen1991pseudospectra} presenting an overview
	of the theory and applications in \cite{trefethen2005spectra}.

 	Given two linear operators $L$ and $E$, a pseudo-eigenfunction $u$ of $L$ admits the following eigenvalue problem
	\begin{equation}
	\label{eq:pseudo-ef}
	(L+E)u = \lambda u,\,\,\, \textrm{s.t.} \,\, \|E\| \le \varepsilon.
	\end{equation}
	That is, $u$ is an eigenfunction of an operator which is very similar to $L$, up to a small perturbation.
 The corresponding value $\lambda$ is said to be a \emph{pseudo-eigenvalue}, or more precisely an element in the $\varepsilon$-\emph{pseudosepctra} of $L$.
 Note that $\lambda$ does not have to be close to any eigenvalue of $L$, but is an exact eigenvalue of some operator similar to $L$.

	For nonlinear operators, it is not trivial how this notion could be generalized (as two operators cannot simply be added).
	Therefore, we define a somewhat different notion, which we refer to as a \emph{measure of affinity to eigenfunctions}.
	The measure is in the range $[0,1]$ and attains a value of 1 for eigenfunctions (and only for them). When it is very close to 1, this can be considered as an alternative definition of a pseudo-eigenfunction, which is a very useful notion in the discrete and graph case, as one may not be able to obtain a precise nonlinear eigenfunction in all cases (but may reach numerically a good approximation). We show below the exact relation for the linear case.
	
	\subsection{Measure of affinity of nonlinear eigenfunctions}
	\label{sec:aff_meas}
	Let $T$ be a general nonlinear operator in a Banach space $\mathcal{X}$,  $T:\mathcal{X} \to \mathcal{X}$ embedded in $L^2$ such that $T(u) \in L^2$.
	The corresponding nonlinear eigenvalue problem is \eqref{eq:gen_ef_prob}, ($T(u) = \lambda u$).
	
	\begin{definition}
		\label{def:A_meas}
		The measure $\mathcal{A}_{T}(u)$ of the affinity of a function $u$ to an eigenfunction, based on the operator $T$, with $\| u \|\ne 0$, $\| T(u) \|\ne 0$ , is defined by
		\begin{equation}
		\label{eq:ef_aff}
		\mathcal{A}_{T}(u):=\frac{|\langle u,T(u) \rangle|}{\| u \| \cdot \|T(u) \|}.
		\end{equation}
	\end{definition}
	
	\begin{proposition}
		$\mathcal{A}_{T}(u)$ admits the following
		\begin{equation}\label{eq:A_bound}
		0 \le \mathcal{A}_T(u) \le 1, \,\, \mathcal{A}_T(u)=1 \,\, \text{iff u admits the eigenvalue problem}.
		\end{equation}
	\end{proposition}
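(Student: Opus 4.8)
The plan is to recognize $\mathcal{A}_T(u)$ as (the absolute value of) the cosine of the angle between the two vectors $u$ and $T(u)$ in the Hilbert space $\mathcal{X}$ endowed with the $L^2$ inner product, and then simply invoke the Cauchy--Schwarz inequality together with its equality case. First I would note that both $u$ and $T(u)$ are nonzero by hypothesis, so the denominator $\|u\|\cdot\|T(u)\|$ is strictly positive and $\mathcal{A}_T(u)$ is well defined. Then Cauchy--Schwarz gives $|\langle u,T(u)\rangle|\le \|u\|\cdot\|T(u)\|$, which yields $\mathcal{A}_T(u)\le 1$; non-negativity is immediate since the numerator is an absolute value and the denominator is positive. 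This establishes the two-sided bound in \eqref{eq:A_bound}.

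For the equality characterization, I would use the standard fact that Cauchy--Schwarz holds with equality if and only if $u$ and $T(u)$ are linearly dependent, i.e. there exists a scalar $\mu\in\mathbb{R}$ with $T(u)=\mu u$ (the case $u=0$ being excluded). If $u$ admits the eigenvalue problem \eqref{eq:gen_ef_prob}, then $T(u)=\lambda u$ and a direct substitution gives $\langle u,T(u)\rangle=\lambda\|u\|^2$ and $\|T(u)\|=|\lambda|\,\|u\|$ (with $\lambda\neq 0$ since $\|T(u)\|\neq 0$), so $\mathcal{A}_T(u)=\frac{|\lambda|\,\|u\|^2}{|\lambda|\,\|u\|^2}=1$. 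Conversely, if $\mathcal{A}_T(u)=1$, the equality case of Cauchy--Schwarz forces $T(u)=\mu u$ for some $\mu\in\mathbb{R}$; since $\|T(u)\|\neq 0$ we have $\mu\neq 0$, so $u$ is an eigenfunction with eigenvalue $\lambda=\mu$.

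The argument is essentially a one-line application of Cauchy--Schwarz, so there is no real obstacle; the only point that requires a moment of care is the equality case, where one must rule out the degenerate direction by using the hypotheses $\|u\|\neq 0$ and $\|T(u)\|\neq 0$, ensuring that the proportionality constant $\mu$ is a genuine nonzero eigenvalue rather than a trivial artifact. I would state the equality case of Cauchy--Schwarz explicitly (linear dependence) so that the "iff" in \eqref{eq:A_bound} is transparent, and conclude.
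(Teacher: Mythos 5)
Your proof is correct and follows the same route as the paper, which likewise disposes of the bound and the equality case as an immediate consequence of the Cauchy--Schwarz inequality and its linear-dependence equality condition. Your only addition is spelling out the nondegeneracy bookkeeping ($\|u\|\neq 0$, $\|T(u)\|\neq 0$, hence $\mu\neq 0$), which the paper leaves implicit.
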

	
	\begin{proof}
		This is an immediate consequence of the Cauchy-Schwarz inequality.
	\end{proof}
That is, the measure is 1 for all eigenfunctions and only for them (we remind that for the Cauchy-Schwarz inequality equality is attained if and only if the two functions are linearly dependent). The measure then has a graceful degradation from 1 to 0.

	Let us define the projection of $u$ onto the plane orthogonal to $T(u)$:
	$$w:=  u - \frac{\langle u,T(u) \rangle}{\|T(u)\|^2}T(u).$$
	Then $\mathcal{A}_{T}(u)$ decreases as $\|w\|$ increases, where for eigenfunctions $\|w\|=0$.	
	Using the above we determine a pseudo-eigenfunction being close up to $\varepsilon$ to an exact eigenfunction of a nonlinear operator, if the following bound on $\mathcal{A}_T(u)$ holds
\begin{equation}
	\label{eq:A_eps}
	\mathcal{A}_T(u) \ge 1- \varepsilon.
	\end{equation}
	
	\qquad \qquad \qquad
	\begin{figure}[!ht]
		\centering
		\subfloat[]{
			\includegraphics[width=0.35\textwidth]{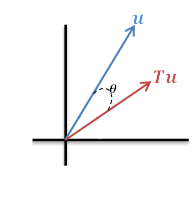}
		}
		\label{fig:geom_th_eps_clr}
		\qquad \qquad \qquad
		\subfloat[]{
			\includegraphics[width=0.35\textwidth]{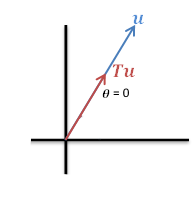}
		}
		\label{fig:geom_th_eps_ef_clr}
		\caption{An illustration of the angle induced by $u$ and $T(u)$. Figure (a) shows the case induced when $u$ is an arbitrary function, while figure (b) illustrates the case that $u$ is an eigenfunction of $T$.}
		\label{fig:geom_interp}
	\end{figure}
	
{\bf Geometric interpretation of the measure.}

Considering definition \ref{def:A_meas}, it can be written as  $\mathcal{A}_{T}(u)=\cos(\theta)$, i.e $\mathcal{A}_{T}(u)$ is based on the angle between $u$ and $T(u)$.
Thus it may be more insightful to look at $\theta$ itself,
	\begin{equation}
	\label{eq:theta_measure}
	\theta = \cos^{-1}(\mathcal{A}_{T}(u)).
	\end{equation}
An illustration of two cases, non-eigenfunction (a) and eigenfunction (b), is shown in  figure \ref{fig:geom_interp}.
Both values of $\mathcal{A}_{T}(u(t))$ and $\theta(u(t))$ were computed as a function of time throughout several flows and are shown in the experimental section.

		\begin{figure}[t]
			\captionsetup{justification=centering}
			\centering
			\subfloat[input function]{
				\includegraphics[width=0.20\textwidth,valign=c]{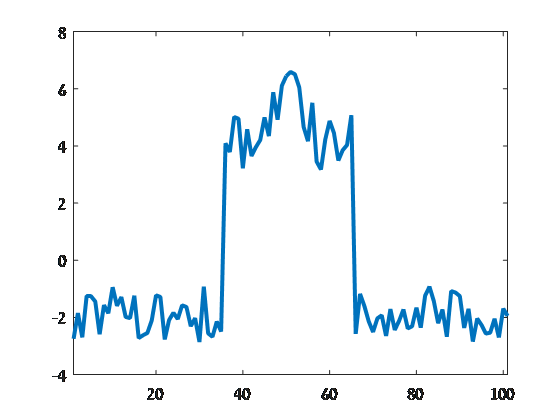}
			}
			\label{fig:tv_point_input}
			\begin{minipage}{0.75\textwidth}
				\captionsetup{justification=centering}
				\subfloat[intermidiate step in proposed method]{
					\includegraphics[width=0.26\textwidth,valign=b]{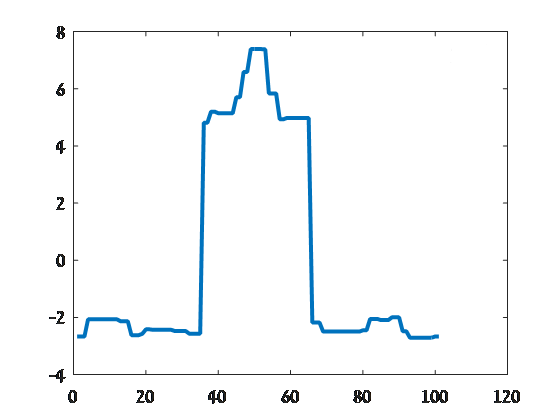}
				}
				\label{figtv_point_first_inst}
				\subfloat[intermidiate step in proposed method]{
					\includegraphics[width=0.26\textwidth,valign=b]{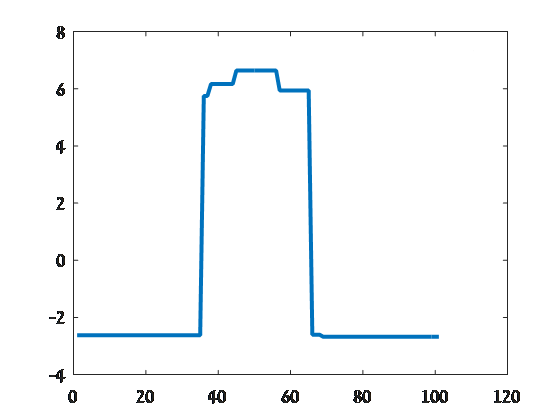}
				}
				\label{fig:tv_point_final_inst}
				\subfloat[converged E.F $\lambda_{prop} = 0.436$]{
					\includegraphics[width=0.23\textwidth,valign=b]{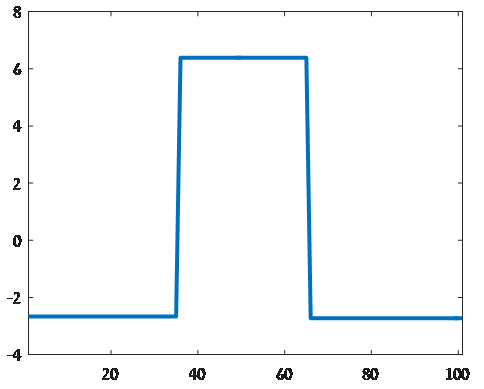}
				}
				\\
				\subfloat[intermediate step in IPM]{
					\includegraphics[width=0.26\textwidth,valign=b]{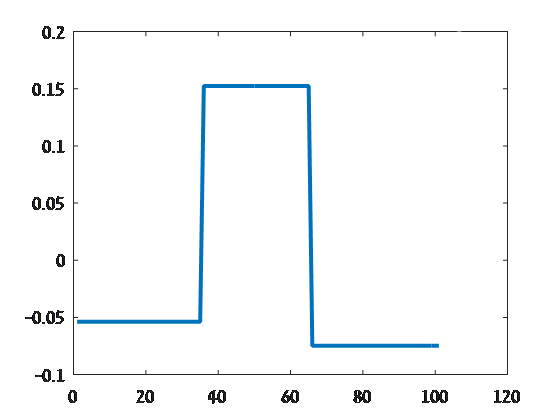}
				}
				\label{fig:ttv_point_ipm_first_inst}
				\subfloat[intermediate step in IPM]{
					\includegraphics[width=0.26\textwidth,valign=b]{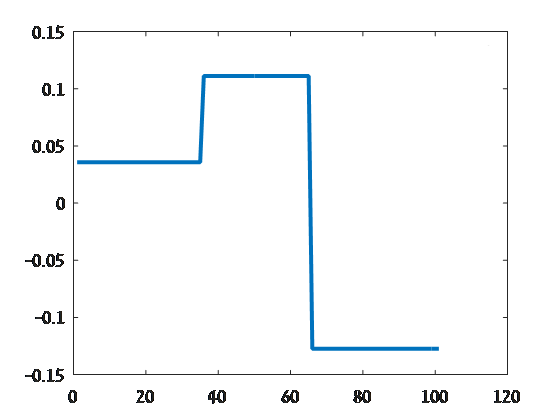}
				}
				\label{fig:tv_point_ipm_final_inst}
				\captionsetup{justification=centering}
				\subfloat[converged E.F $\lambda_{IPM} = 0.208$]{
					\includegraphics[width=0.23\textwidth,valign=b]{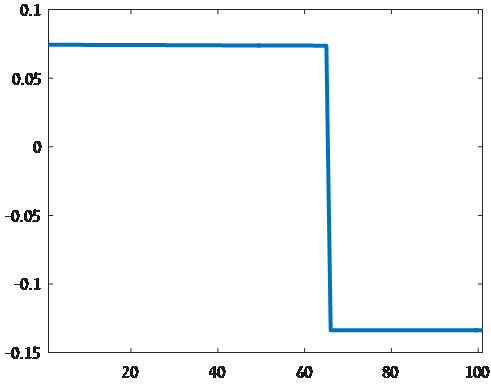}
				}
				
			\end{minipage}
			
			\captionsetup{justification=justified}
			\caption{A 1D example of the two methods for the TV functional. The upper row represents the proposed method, while the lower row is the IPM. (a) is the initial input. (b-c) \& (e-f) are examples of intermediate steps of the iterative methods. (d) \& (g) shows the final state (i.e the eigenfunction) each method converged to.}
			\label{fig:f_point_noise}
		\end{figure}

    \subsubsection{The 1-homogeneous and linear case}
	For eigenfunctions induced by one-homogeneous functionals we give
	the following adaptation of \eqref{eq:ef_aff}.
	
	\begin{equation}
	\label{eq:ef_aff_J}
	\textrm{A}_{p(u)}(u)  =  \frac{J(u)}{\| p(u)\| \cdot \|u \|} = \frac{\langle p(u),u\rangle}{\| p(u)\| \cdot \|u \|} ,
	\end{equation}
	for all $p(u) \in \partial J(u)$ (and having $J(u)\ge0$ the absolute expression in the numerator can be omitted).

	Having a linear operator $L$ the definition is now given by
	\begin{equation}
	\textrm{A}_{L}(u) =  \frac{\langle Lu,u\rangle}{\| Lu\| \cdot \|u \|} .
	\end{equation}
	We would like to show a connection between $A_L(u)$ and the pseudo-eigenfunction definition given in \eqref{eq:pseudo-ef}. Let $u$ admit \eqref{eq:pseudo-ef}, then
	\begingroup
	\addtolength{\jot}{1em}
	\begin{align*}
	1 = \textrm{A}_{L+E}(u) &=  \frac{\langle (L+E)u,u\rangle}{\| (L+E)u\| \cdot \|u \|} = \frac{\langle Lu,u\rangle}{\| (L+E)u\| \cdot \|u \|} + \frac{\langle Eu,u\rangle}{\| (L+E)u\| \cdot \|u \|} \\
	& \le \frac{\langle Lu,u\rangle}{(\norm{Lu} + \norm{Eu}) \cdot \|u \|} + \frac{ \norm{Eu} \cdot \norm{u}} {\| (L+E)u\| \cdot \|u \|}, \\
	\intertext{where for the first expression we use the triangle inequality in the denominator and for the second expression the Cauchy-Schwarz inequality in the numerator. Then, using $(L+E)u=\lambda u$, $\norm{E}\le \varepsilon$ and the definition of an operator norm  we get}
	& \le \frac{\langle Lu,u\rangle}{\norm{Lu} \cdot \|u \|} + \frac{ \norm{E} \cdot \norm{u}} {\| (L+E)u\|} \le \textrm{A}_{L}(u) + \frac{\varepsilon}{\lambda},
	\end{align*}
	\endgroup
	and we conclude that
	\begin{equation}
	\textrm{A}_{L}(u) \ge 1 - \frac{\varepsilon}{\lambda}.
	\end{equation}

	\section{Results}
	\label{sec:numerics}
	In the following section we present numerical results for our algorithm. We show results for the TV and TGV functionals, and visualize the geometric interpetation of our new measure. Further more, we compare our results to another technique by Hein and B\"{u}hler \cite{Hein2010IPM} as described in section \ref{sec:prev_work}.
	
	\subsection{Discretization}
	\label{sec:discrete}
	For the purpose of implementing numerically the methods presented in this paper we use Chambolle and Pock's primal-dual algorithm \cite{Chambolle2011} for solving the optimization problems defined for each method and each functional (TV and TGV). As the chosen discretization can affect the solution and the results at convergence (numerical eigenfunctions) we specify the precise gradient and divergence operators used in these experiments.
We use the standard first order forward/backward-difference operators which are commonly used for TV and TGV (see e.g.\cite{Chambolle[1],agco06}).
For $ u \in \mathcal{X}$ the gradient $\nabla u$ is a vector $\in \mathcal{X} \times \mathcal{X}$ given by: $(\nabla u)_{i,j} = ((\nabla u)^1_{i,j}, (\nabla u)^2_{i,j})$, with
	\[
	(\nabla u)^1_{i,j} =
	\begin{cases}
	u_{i+1,j} - u_{i,j}, & \text{if } i<N \\
	0, & \text{if } i=N
	\end{cases}
	\]
	and
	\[
	(\nabla u)^2_{i,j} =
	\begin{cases}
	u_{i,j+1} - u_{i,j}, & \text{if } j<N \\
	0, & \text{if } j=N
	\end{cases}
	\,\,\, .
	\]
	\vspace{5mm}
	
\begin{figure}[t]
		\centering
		\includegraphics[width=0.95 \textwidth,valign=c]{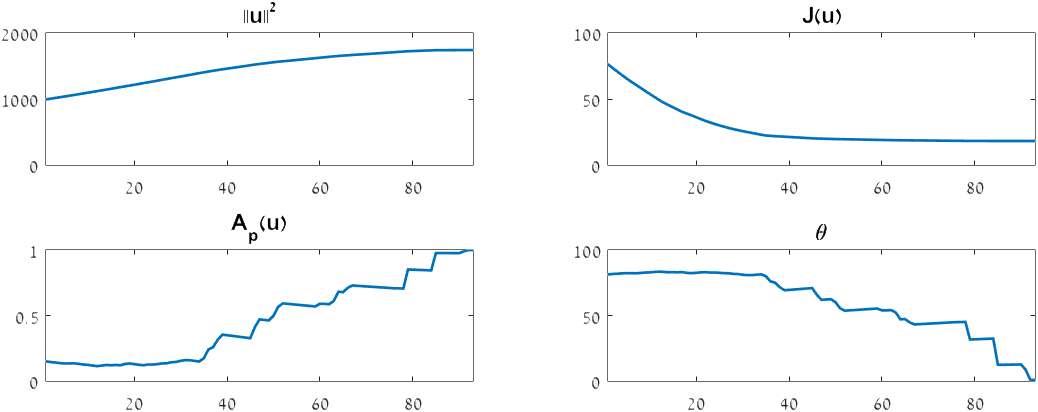}
		\caption{The evolution of $J(u),\, \|u\|^2, \, A_p(u)$ and $\theta$ as a function of $t$, for the given initial input in figure \ref{fig:f_point_noise}. Notice that $\|u\|^2$ is monotonically increasing and that $J(u)$ is monotonically decreasing.}
		\label{fig:tv_point_outcome}
	\end{figure}

The discrete divergence is the adjoint of the discrete gradient, defined by
	\begin{align*}
	(\text{div} \, z)_{i,j} &=
	\begin{cases}
	z^1_{i,j} - z^1_{i-1,j}, & \text{if } 1<i<N, \\
	z^1_{i,j}, & \text{if } i=1, \\
	-z^1_{i-1,j}, & \text{if } i=N,
	\end{cases} \\
	& +
	\begin{cases}
	z^2_{i,j} - z^2_{i,j-1}, & \text{if } 1<j<N, \\
	z^2_{i,j}, & \text{if } j=1, \\
	-z^1_{i,j-1}, & \text{if } j=N.
	\end{cases}
	\end{align*}
Other discretizations (such as spatially symmetric ones) would yield different eigenfunctions.	
	
	\subsection{Numerical implementation}
	\label{sec:examples}
Recall the basic forward flow given in \eqref{eq:for_flow}. Rewriting the PDE in a discrete semi-implicit setting yields
	\begin{equation}
	u^{k+1} = u^k + \Delta t \left( \frac{u^{k+1}}{\|u^k\|} - \frac{p^{k+1}}{\|p^k\|} \right),
	\end{equation}
	with $\Delta t$ indicating the chosen time-step to use. This equation can be reformulated into the following optimization problem
	\begin{equation}
	\label{eq:flow_opt_rep}
	u^{k+1} = \underset{v}{\arg \min} \,\, \left\{J(v) + \frac{\|p^k\|}{2\Delta t} \left(1 - \frac{\Delta t}{\|u^k\|} \right) \norm{\frac{u^k}{1 - \frac{\Delta t}{\|u^k\|}} - v}^2_{L^2} \right\},
	\end{equation}
	where $p \in \partial J(u) $.
We solve this optimization problem iteratively until convergence. Our stopping criterion is based on the affinity measure as defined in \eqref{eq:theta_measure}, when the difference between consecutive steps is smaller than a predefined threshold $\epsilon$ as shown in figure \ref{alg:ef_gen}. The algorithm consists of solving a non-smooth convex optimization problem for which several numerical algorithms are suitable. We chose as a solver a first order primal-dual algorithm \cite{Chambolle2011}. It turns out this solvers is well fit for these kind of problems and we also readily get $p$. The following values were used in all experiments: $\Delta t = 0.2$,  $\epsilon = 0.1$, and $\theta_{thresh} = 1$.
	
	\begin{algorithm}
		\caption{Computing a nonlinear eigenfunction for a convex one-homogeneous functional}
		\label{alg:ef_gen}
		\begin{algorithmic}[1]
			\Initialize {$u^0 = f, \,\, \Delta t, \,\, \epsilon,\,\, \theta_{thresh}$, Compute $p^0=p\in J(f)$.}
			\Repeat
			\State{$u^{k+1} = \underset{v}{\arg \min} \,\, \left\{J(v) + \frac{\|p^k\|}{2\Delta t} \left(1 - \frac{\Delta t}{\|u^k\|} \right) \norm{\frac{u^k}{1 - \frac{\Delta t}{\|u^k\|}} - v}^2_{L^2} \right\}, \,\,\,\,\, p^{k+1} \in \partial J(u^{k+1}) $ }
			\vspace{2mm}
			\State{ $\textrm{A}^{k+1}_{p}(u) = \frac{\<u^{k+1},p^{k+1}\>}{ \norm{p^{k+1}} \cdot \norm{u^{k+1}} }$ }
			\vspace{2mm}
			\State{ $\theta^{k+1} = \cos^{-1}(\textrm{A}^{k+1}_{p}(u)) $}
			\vspace{2mm}
			\Until {$ |\theta^{k+1} - \theta^{k} | < \epsilon $} \textbf{and} $\theta^{k+1} \le \theta_{thresh}$ \\
			\Return $u^{k+1} $
		\end{algorithmic}
	\end{algorithm}
		
The inverse flow given in equation \eqref{eq:inv_flow} can not be reformulated as an optimization problem as it is not guaranteed that the problem is convex. Therefore in order to implement the inverse flow we utilize an explicit scheme. We write \eqref{eq:inv_flow} in an explicit discrete setting as follows
\begin{equation}\label{eq:inv_explicit}
	u^{k+1} = u^k + \Delta t \left( -\frac{u^{k}}{\|u^k\|} + \frac{p^{k}}{\|p^k\|} \right).
\end{equation}
The algorithm to find eigenfunctions using the inverse flow is the same as in algorithm \ref{alg:ef_gen}, but with a slight change. We replace the optimization problem in line 3 with the explicit equation given in \eqref{eq:inv_explicit}. $p^k$ can still be computed as the subgradient of $J(u^k)$. All other parts of the algorithm remain the same. An example between the different results produced by the two flows: forward and inverse; for the same input is given in figure \ref{fig:tv_for_inv}.

		\begin{figure}[!ht]
			\captionsetup{justification=centering}
			\centering
			\subfloat[input function]{
				\includegraphics[width=0.2\textwidth,valign=c]{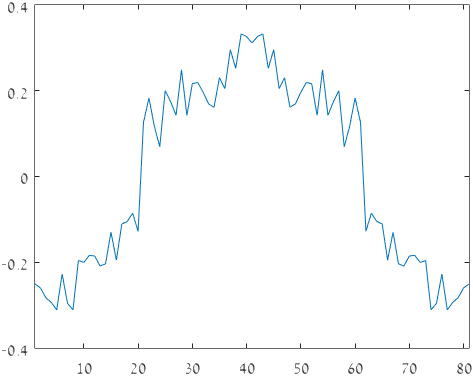}
			}
			\label{fig:tgv_ef_input}
			\begin{minipage}{0.75\textwidth}
				\captionsetup{justification=centering}
				\subfloat[intermidiate step in proposed method]{
					\includegraphics[width=0.25\textwidth,valign=b]{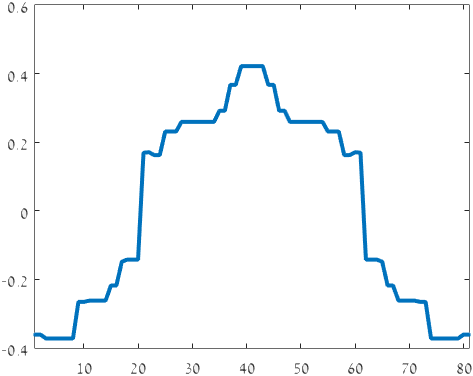}
				}
				\label{fig:tgv_ef_first_inst}
				\subfloat[intermidiate step in proposed method]{
					\includegraphics[width=0.25\textwidth,valign=b]{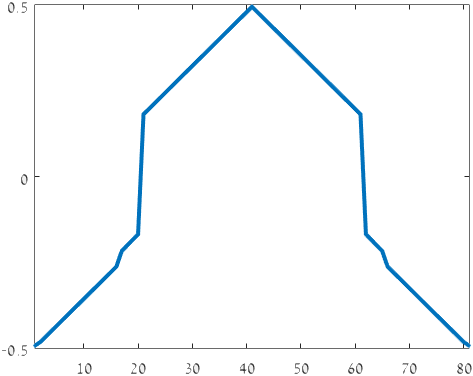}
				}
				\label{fig:tgv_ef_final_inst}
				\subfloat[converged E.F $\lambda_{prop} = 0.0020$]{
					\includegraphics[width=0.25\textwidth,valign=b]{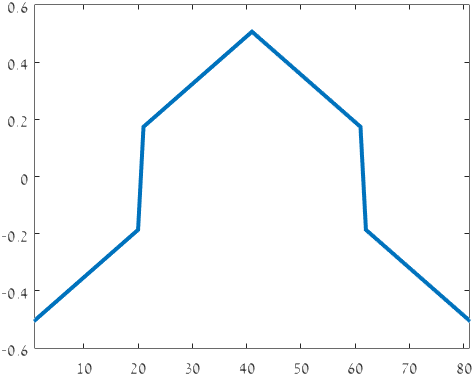}
				}
				\\
				\subfloat[intermediate step in IPM]{
					\includegraphics[width=0.25\textwidth,valign=b]{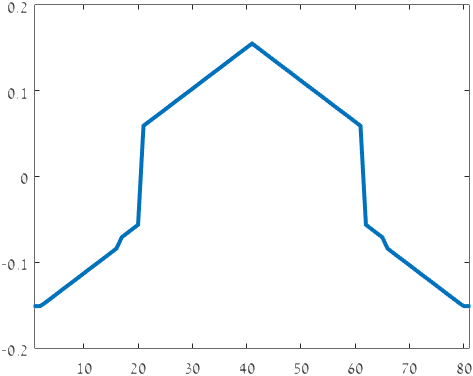}
				}
				\label{fig:tgv_ef_ipm_first_inst}
				\subfloat[intermediate step in IPM]{
					\includegraphics[width=0.25\textwidth,valign=b]{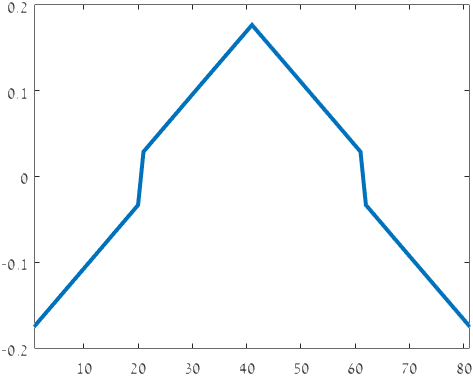}
				}
				\label{fig:tgv_ef_ipm_final_inst}
				\captionsetup{justification=centering}
				\subfloat[converged E.F $\lambda_{IPM} = 0.0019$]{
					\includegraphics[width=0.25\textwidth,valign=b]{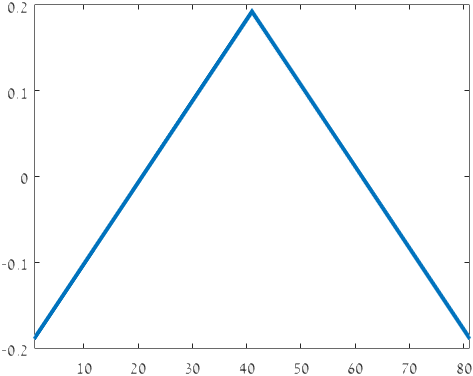}
				}
				
			\end{minipage}
			
			\captionsetup{justification=justified}
			\caption{A 1D example of the two methods for the TGV functional. The upper row represents the proposed method, while the lower row is the IPM. (a) is the initial input. (b-c) \& (e-f) are examples of intermediate steps of the iterative methods. (d) \& (g) shows the final state (i.e the eigenfunction) each method converged to.}
			\label{fig:tgv_ef_noise}
		\end{figure}
		
		\begin{figure}[!htb]
			\centering
			\includegraphics[width=0.95 \textwidth,valign=c]{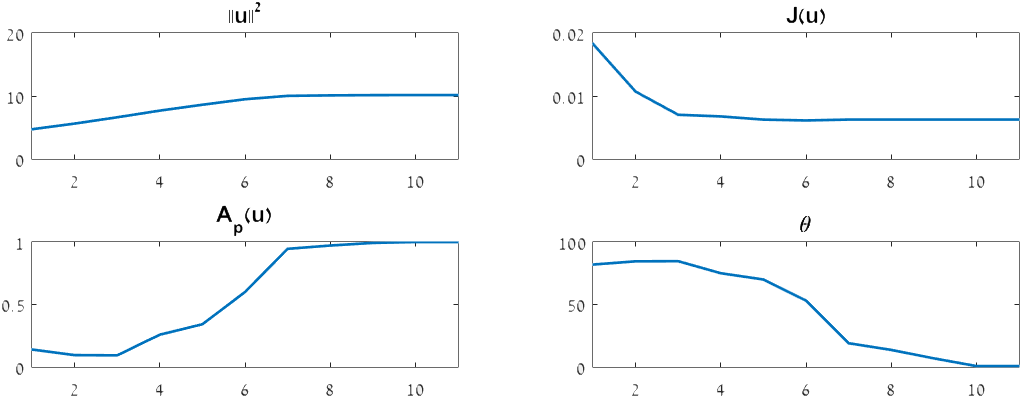}
			\caption{The evolution of $J(u),\, \|u\|^2, \, A_p(u)$ and $\theta$ as a function of $t$, for the given initial input in figure \ref{fig:tgv_ef_noise}. Notice that $\|u\|^2$ is monotonically increasing and that $J(u)$ is monotonically decreasing}
			\label{fig:tgv_ef_noise_outcome}
		\end{figure}

		\begin{figure}[!ht]
			\captionsetup{justification=centering}
			\centering
			\subfloat[input function]{
				\includegraphics[width=0.20\textwidth,valign=c]{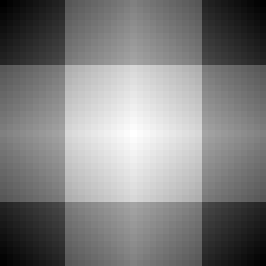}
			}
			\label{fig:tgv_2d_input}
			\begin{minipage}{0.75\textwidth}
				\subfloat[intermidiate step in proposed method]{
					\includegraphics[width=0.25\textwidth,valign=b]{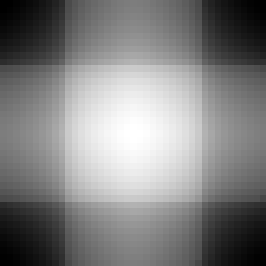}
				}
				\label{fig:tgv_2d_first_inst}
				\subfloat[converged E.F $\lambda_{prop} = 0.049$ ]{
					\includegraphics[width=0.25\textwidth,valign=b]{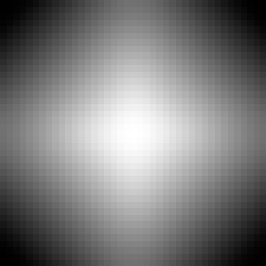}
				}
				\label{fig:tgv_2d_final_inst}
				\subfloat[converged E.F 3D view]{
					\includegraphics[width=0.35\textwidth,valign=b]{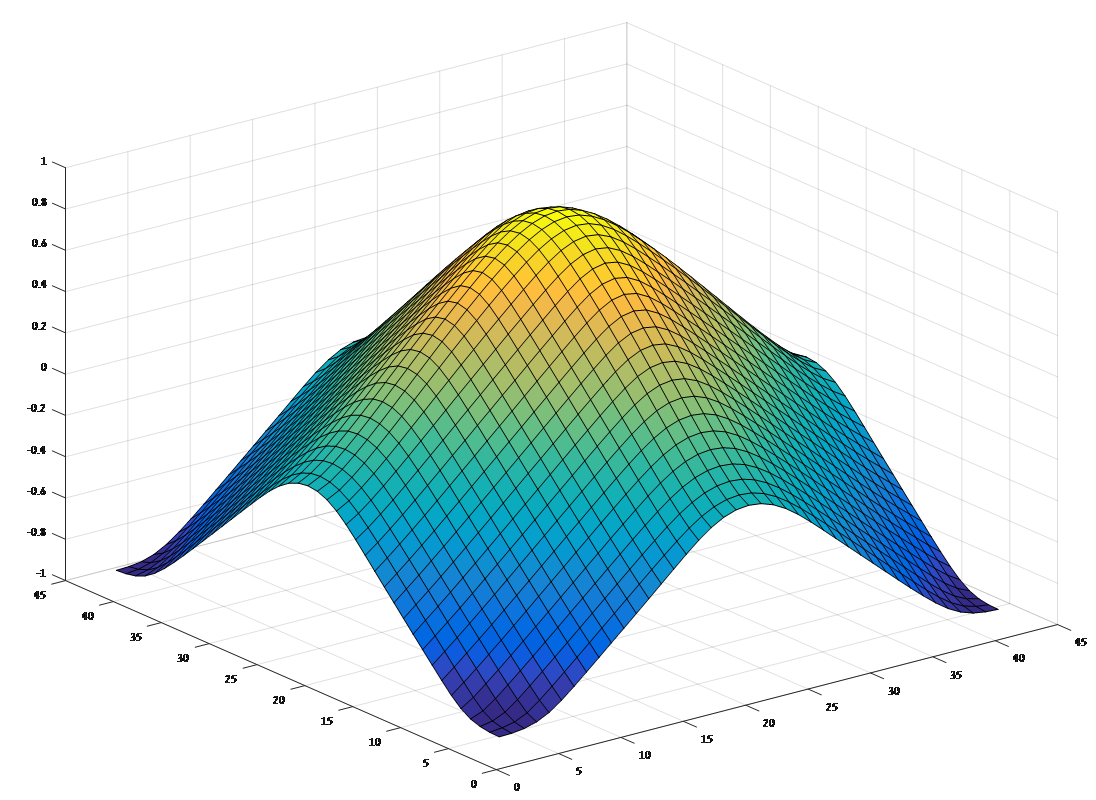}
				}
				\\
				\captionsetup{justification=justified}
				\subfloat[intermediate step in IPM]{
					\includegraphics[width=0.25\textwidth,valign=b]{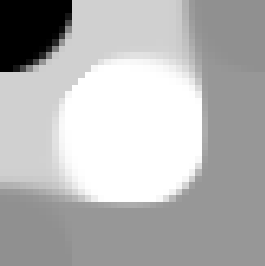}
				}
				\captionsetup{justification=centering}
				\label{fig:tgv_2d_ipm_first_inst}
				\subfloat[converged E.F $\lambda_{IPM} = 0.008$]{
					\includegraphics[width=0.25\textwidth,valign=b]{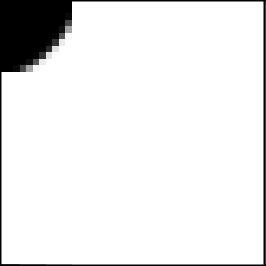}
				}
				\label{fig:tgv_2d_ipm_final_inst}
				\subfloat[converged E.F 3D view]{
					\includegraphics[width=0.35\textwidth,valign=b]{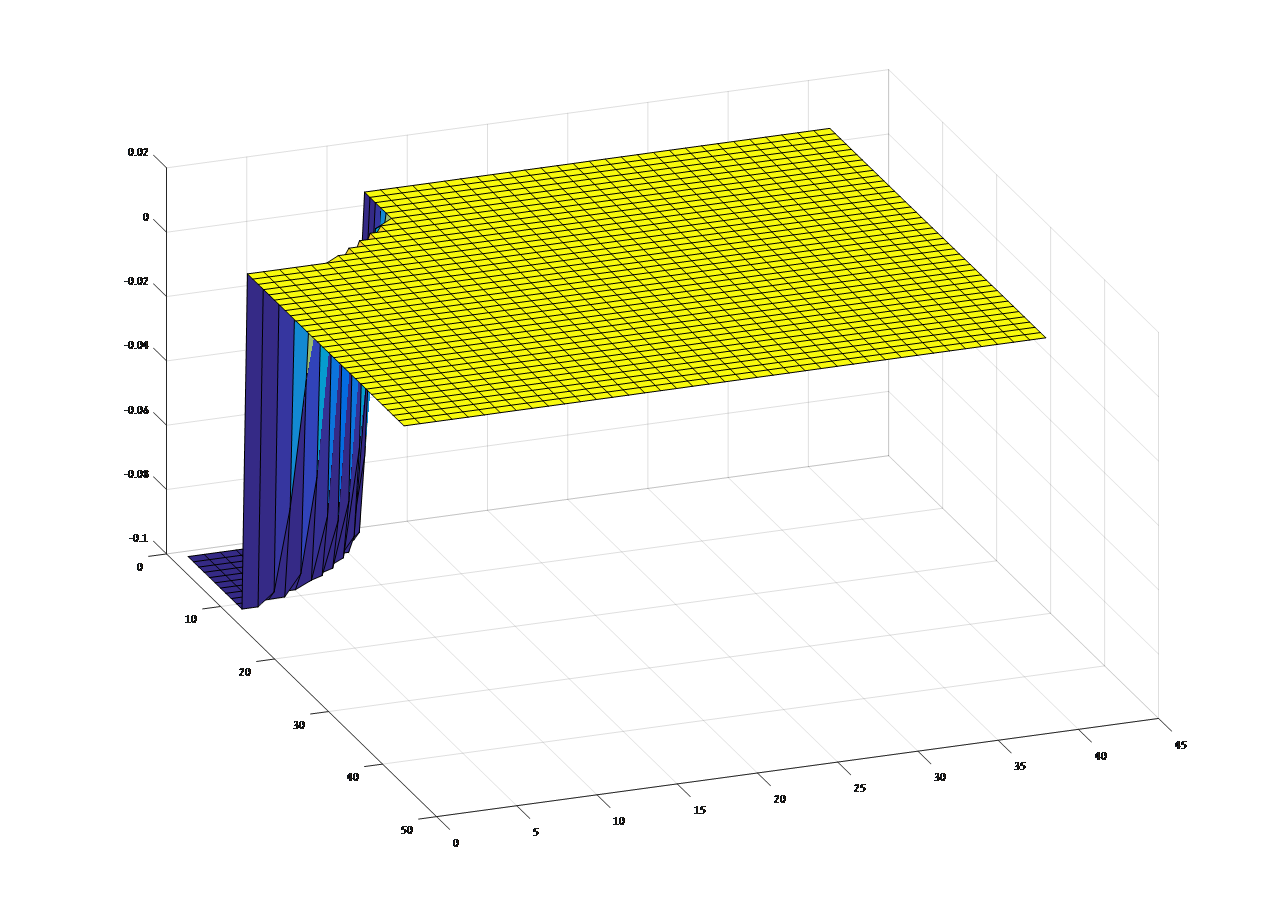}
				}
				
			\end{minipage}
			
			\captionsetup{justification=justified}
			\caption{A 2D example of the two methods for the TGV functional. The upper row represents the proposed method, while the lower row is the IPM. (a) is the initial input. (b) \& (e) are examples of intermediate steps in the iterative methods. (c) \& (f) shows the final state (i.e the eigenfunction) each method converged to.(d) \& (g) are 3D views for better understanding of the shapes of the resulted eigenfunction of each method.}
			\label{fig:tgv_2d}
		\end{figure}
		
		\begin{figure}[!ht]
			\centering
			\includegraphics[width=0.95 \textwidth,valign=c]{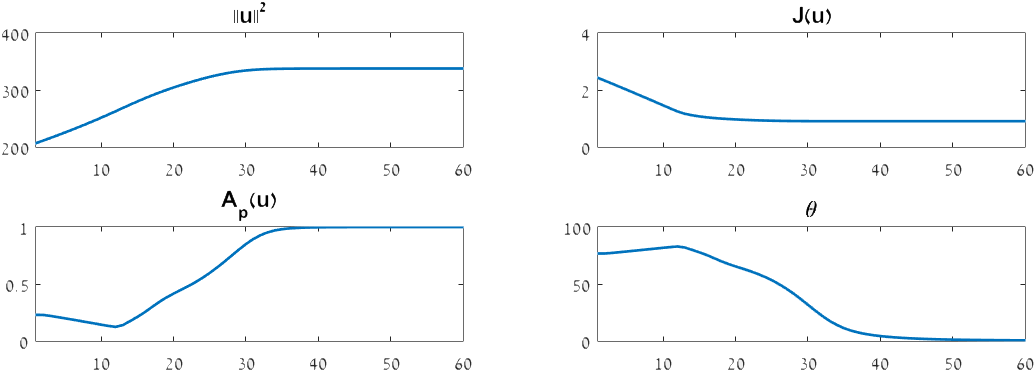}
			\caption{The evolution of $J(u),\, \|u\|^2, \, A_p(u)$ and $\theta$ as a function of $t$, for the given initial input in figure \ref{fig:tgv_2d}. Notice that $\|u\|^2$ is monotonically increasing and that $J(u)$ is monotonically decreasing}
			\label{fig:tgv2d_outcome}
		\end{figure}
		
	\begin{figure}[t]
		\captionsetup{justification=centering}
		\centering
		\subfloat[input function]{
			\includegraphics[width=0.18\textwidth,valign=c]{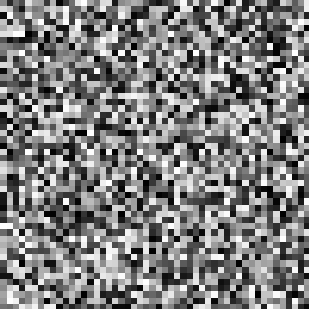}
			
		}
		\quad
		\label{fig:tv_noise_input}
		\subfloat[]{
			\includegraphics[width=0.18\textwidth,valign=c]{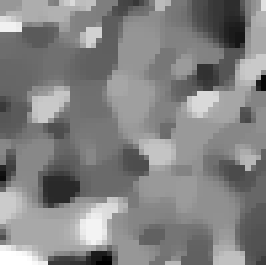}
		}
		\quad
		\label{fig:tv_noise_first_inst}
		\subfloat[]{
			\includegraphics[width=0.18\textwidth,valign=c]{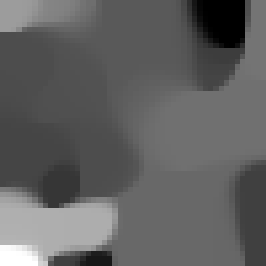}
		}
		\quad
		\label{fig:tv_noise_second_inst}
		\subfloat[converged E.F $\lambda_{prop} = 1.941$]{
			\includegraphics[width=0.18\textwidth,valign=c]{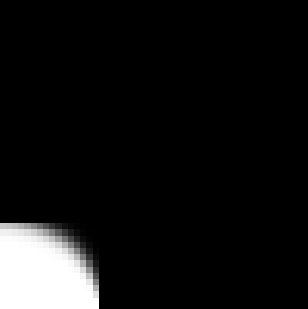}
		}
		\label{fig:tv_noise_final_inst}
		\captionsetup{justification=justified}
		\caption{A 2D example of the two methods for the TV functional. (a) is the initial input that is random Gaussian noise. (b-c) are two samples of intermediate steps of the iterative method. (d) shows the final state (i.e the eigenfunction) the proposed method converged to.}
		\label{fig:tv_noise}
	\end{figure}
	
	\begin{figure}[t]
		\captionsetup{justification=centering}
		\centering
		\subfloat[input function]{
			\includegraphics[width=0.18\textwidth,valign=c]{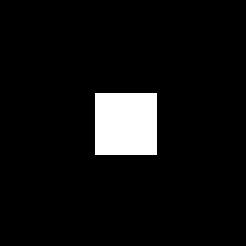}
			
		}
		\quad
		\label{fig:tv_square_input}
		\subfloat[]{
			\includegraphics[width=0.18\textwidth,valign=c]{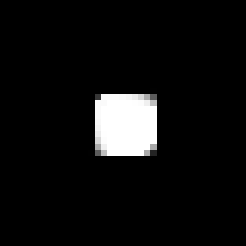}
		}
		\quad
		\label{fig:tv_square_first_inst}
		\subfloat[]{
			\includegraphics[width=0.18\textwidth,valign=c]{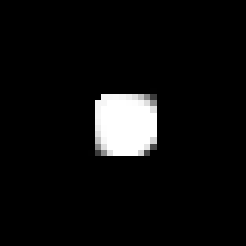}
		}
		\quad
		\label{fig:tv_square_second_inst}
		\subfloat[converged E.F $\lambda_{prop} = 3.835$ ]{
			\includegraphics[width=0.18\textwidth,valign=c]{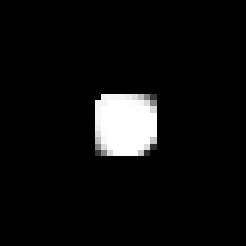}
		}
		\label{fig:tv_square_final_inst}
		\captionsetup{justification=justified}
		\caption{A 2D example of the proposed method for the TV functional that resulted in the eigenfunction used in the example of figure \ref{fig:EF_spec}. (a) is the initial input that is a square. (b-c) are two samples of intermediate steps of the iterative method. (d) shows the final state (i.e the eigenfunction) the proposed method converged to.}
		\label{fig:tv_square}
	\end{figure}
	
\subsection{Expirements}
We compare our flow method with the IPM. We have performed the comparisons both for the TV and TGV functionals as well as for 1D signals and 2D signals. The comparison is done by applying the same initial conditions for each method and performing iterations until the required convergence criterion is met (which is the same for both cases).
Figure \ref{fig:f_point_noise} shows the results for the case of generating an eigenfunction for the TV functional. Both our method and the IPM are depicted. It shows a sample of some iterations and the final result the algorithms converge to. As expected, both methods converge to an eigenfunction. However, the outcome of the algorithms is different. We note that while both methods converge to non-trivial eigenfunctions, the IPM converges to a simpler one, with less structure (closer to the first ground-state \cite{benning2012ground}). A similar phenomenon happens also in figures \ref{fig:tgv_ef_noise} and \ref{fig:tgv_2d}. These figures illustrate the progression of the two methods for the TGV case, in a 1D and 2D setting, respectively. Again in these examples one notices the difference between the two methods. Our method converges to an eigenfunction that is less trivial and is able to give more insight to what shapes the functionals preserve. To the best of our knowledge the result our algorithm converges to in figure \ref{fig:tgv_2d} is a new type of eigenfunction, which is yet to be formalized in an analytical closed form. We would further want to give focus to the time-step parameter $\Delta t$. Although our method often requires more iterations in order to converge to an eigenfunction (even though this is not always the case, e.g figure \ref{fig:tgv_ef_noise}) the number of iterations is dependent on $\Delta t$. Increasing $\Delta t$ will result in less iterations needed for convergence, but will increase the probability that the outcome will be of a more trivial state. Thus, if desired, one can incorporate an adaptive scheme in order to reduce the amount of iterations needed, while being able to maintain convergence to complex eigenfunctions. We can conclude that one of the advantages of our method is the great flexibility of tuning the time-step during the progression of the process.

For each of the examples in figures \ref{fig:f_point_noise}, \ref{fig:tgv_ef_noise}, and \ref{fig:tgv_2d} we also illustrated in figures \ref{fig:tv_point_outcome}, \ref{fig:tgv_ef_noise_outcome}, and \ref{fig:tgv2d_outcome} how $\|u\|^2,\, J(u), \, A_p(u)$ and $\theta$ change throughout the process. As given in theorem \ref{th:for_flow} we can see that $\|u\|^2$ is monotonically increasing and that $J(u)$ is monotonically decreasing. Note however that for $A_p(u)$ and $\theta$ there is no consistent behavior. Figures \ref{fig:tv_square} and \ref{fig:tv_noise} show further examples of our method. In these cases both methods reach very similar results. Figure \ref{fig:tv_square} demonstrates how the flow converged to the eigenfunction used in the example of figure \ref{fig:EF_spec}, and figure \ref{fig:tv_noise} depicts an example of the convergence of our method when given gaussian noise as the initial condition.

We also show a few results for the inverse flow and for the linear extension to the proposed flow. For the implementation of both flows we use the explicit scheme.
In the case of the inverse flow we performed the evolution using the TV functional. We show in figure \ref{fig:tv_for_inv} the resulting difference between the forward flow and the inverse flow given the same initial input. As expected the eigenvalue for the found eigenfunction using the inverse flow is greater than the eigenvalue for the eigenfunction found using the forward flow. In figure \ref{fig:linear_flow} is illustrated the result of the forward flow for a linear operator. We remind that in order to hold the properties given in proposition \ref{th:linear_flow} the operator should be positive semi definite. Therefore we demonstrate this extension on the $-\Delta$ operator. We can see that the flow is converges to an eigenfunction.

\begin{figure}[t]
	\captionsetup{justification=centering}
	\centering
	\subfloat[input function]{
		\includegraphics[width=0.20\textwidth,valign=c]{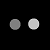}
	}
	\label{fig:tv_for_input}
	\begin{minipage}{0.75\textwidth}
		\subfloat[intermediate step in forward flow]{
			\includegraphics[width=0.25\textwidth,valign=b]{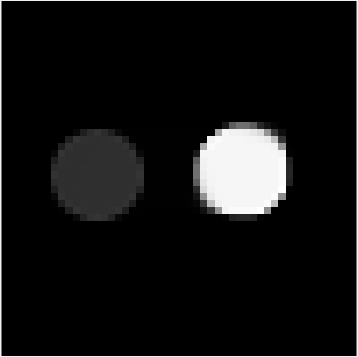}
		}
		\label{fig:tv_for_first_inst}
		\subfloat[intermediate step in forward flow]{
			\includegraphics[width=0.25\textwidth,valign=b]{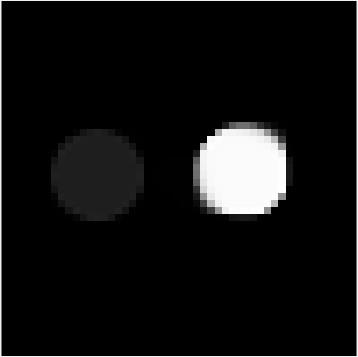}
		}
		\label{fig:tgv_for_second_inst}
		\subfloat[converged E.F $\lambda_{for} = 1.741$]{
			\includegraphics[width=0.25\textwidth,valign=b]{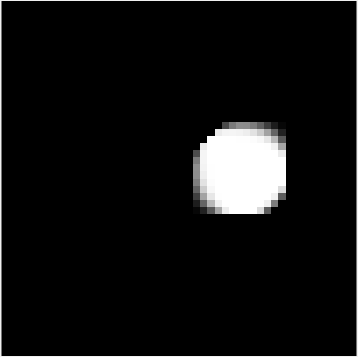}
		}
		\\
		\captionsetup{justification=centering}
		\subfloat[intermediate step in inverse flow]{
			\includegraphics[width=0.25\textwidth,valign=b]{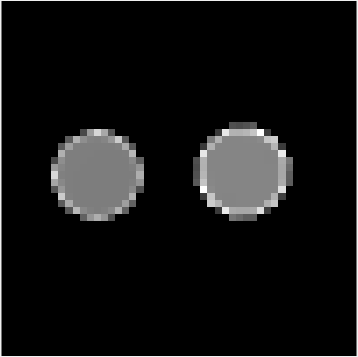}
		}
		\captionsetup{justification=centering}
		\label{fig:tv_inv_first_inst}
		\subfloat[intermediate step in inverse flow]{
			\includegraphics[width=0.25\textwidth,valign=b]{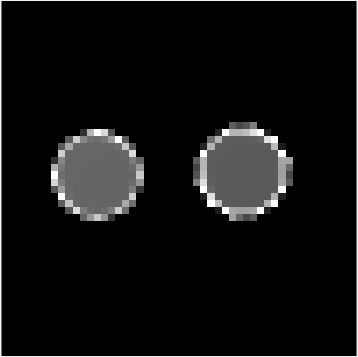}
		}
		\label{fig:tv_inv_final_inst}
		\subfloat[converged E.F $\lambda_{inv} = 4.997$]{
			\includegraphics[width=0.25\textwidth,valign=b]{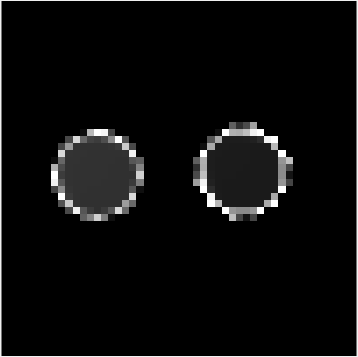}
		}
		
	\end{minipage}
	
	\captionsetup{justification=justified}
	\caption[A 2D example of the proposed forward flow and the proposed inverse flow] {A 2D example of the two forward and inverse flow for the TV functional. The upper row represents the forward flow, while the lower row is the inverse flow. (a) is the initial input. (b,c,e,f) \& (e) are examples of intermediate steps in the iterative methods. (d) \& (g) shows the final state (i.e the eigenfunction) each flow converged to.}
	\label{fig:tv_for_inv}
\end{figure}

\begin{figure}[t]
	\captionsetup{justification=centering}
	\centering
	\subfloat[input function]{
		\includegraphics[width=0.20\textwidth,valign=c]{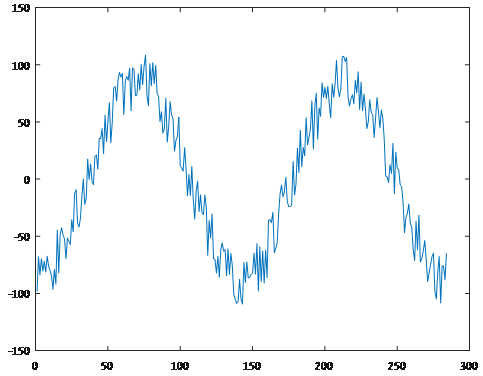}
		
	}
	\quad
	\label{fig:lap_input}
	\subfloat[]{
		\includegraphics[width=0.18\textwidth,valign=c]{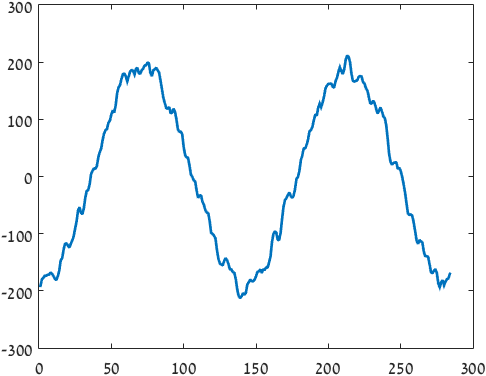}
	}
	\quad
	\label{fig:lap_first_inst}
	\subfloat[]{
		\includegraphics[width=0.18\textwidth,valign=c]{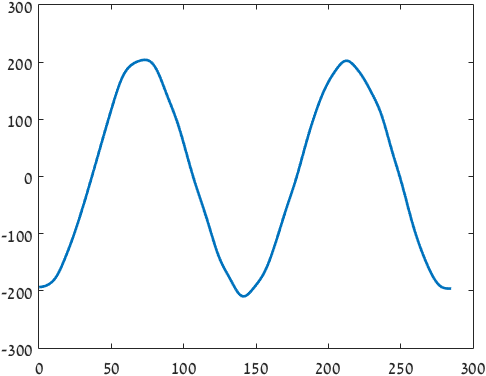}
	}
	\quad
	\label{fig:lap_second_inst}
	\subfloat[converged E.F $\lambda_{prop} = 0.002$ ]{
		\includegraphics[width=0.18\textwidth,valign=c]{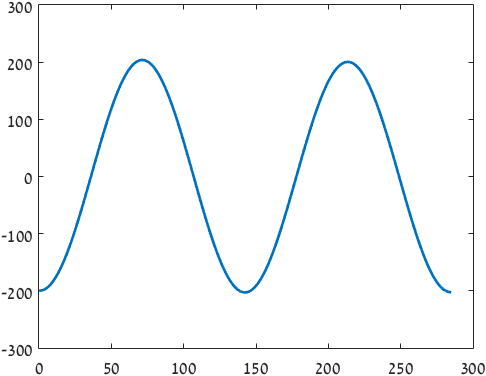}
	}
	\label{fig:lap_final_inst}
	\captionsetup{justification=justified}
	\caption[A 2D example of the proposed flow for a linear operator] {A 1D example of the proposed method for the Laplacian operator. (a) is the initial input. (b-c) are two samples of intermediate steps of the iterative method. (d) shows the final state (i.e the eigenfunction) the proposed method converged to.}
	\label{fig:linear_flow}
\end{figure}
	
	\section{Conclusion}
	In this paper we presented a new method for generating eigenfunctions induced by nonlinear one-homogeneous functionals. In particular we have exemplified our method on the TV and TGV functionals in the 1D and 2D settings, showing numerical convergence to non-trivial eigenfunctions. The flow is interpreted numerically as a series of convex optimization problems and is solved by a primal-dual algorithm \cite{Chambolle2011}.
We further introduced a new measure of affinity that indicates how close a function is to being an eigenfunction of some operator.
	
	Future directions for work include a deeper investigation of the properties of the inverse flow given in \eqref{eq:inv_flow} and for the linear case given in \eqref{eq:liner_for_flow}.
Flows based on nonlinear operators $T$, which are not based on subgradients of functionals, such as Weickert's anisotropic diffusion \cite{wk_book} operator will also be
examined. Further subjects of investigation are extensions of the proposed method to graphs and suitable nonlinear operators as the graph $p-$Laplacian, finding Cheeger sets and more.
	
	\section*{Acknowledgments}
	We would like to acknowledge support by the Israel Science Foundation (grant No. 718/15).
	
	\bibliographystyle{plain}
	\bibliography{references}
	
\end{document}